\def\eqref#1{equation~\ref{#1}}
\def\floor#1{\lfloor #1 \rfloor}
\def\1{\bm{1}}
\DeclareMathAlphabet{\mathsfit}{\encodingdefault}{\sfdefault}{m}{sl}
\SetMathAlphabet{\mathsfit}{bold}{\encodingdefault}{\sfdefault}{bx}{n}
\newcommand{\R}{\mathbb{R}}
\DeclareMathOperator*{\argmin}{arg\,min}
\tikzset{
    ncbar angle/.initial=90,
    ncbar/.style={
        to path=(\tikztostart)
        -- ($(\tikztostart)!#1!\pgfkeysvalueof{/tikz/ncbar angle}:(\tikztotarget)$)
        -- ($(\tikztotarget)!($(\tikztostart)!#1!\pgfkeysvalueof{/tikz/ncbar angle}:(\tikztotarget)$)!\pgfkeysvalueof{/tikz/ncbar angle}:(\tikztostart)$)
        -- (\tikztotarget)
    },
    ncbar/.default=0.5cm,
}
\tikzset{square left brace/.style={ncbar=0.5cm}}
\tikzset{square right brace/.style={ncbar=-0.5cm}}
\newcommand{\reals}{\mathbb{R}}
 \newcommand{\IGNORE}[1]{}
\newcommand*{\eqdef}{\stackrel{\textup{def}}{=}}
\newcommand{\circrm}{\mathrm{circ}}
\newcommand{\vecrm}{{\mathrm{vec}}}
 \def\0{{\bf 0}}
\def\qed{\hfill\hbox{${\vcenter{\vbox{
    \hrule height 0.4pt\hbox{\vrule width 0.4pt height 6pt
    \kern5pt\vrule width 0.4pt}\hrule height 0.4pt}}}$}}
\definecolor{dkr}{rgb}{0.6,0.2,0.2}
\definecolor{dkg}{HTML}{00CC00}
\definecolor{dkb}{rgb}{0.0,0.3,0.7}
\definecolor{blue1}{HTML}{0066FF}
\definecolor{brm}{rgb}{1,0.0,1}
\definecolor{lpurple}{cmyk}{.05,0.18,0,0}
\definecolor{dred}{cmyk}{0,1,.81,.04}
\definecolor{lred}{cmyk}{0,0.26,.1,0}
\definecolor{dgreen}{cmyk}{1,0,.53,.21}
\definecolor{lgreen}{cmyk}{.38,0,.36,0}
\definecolor{dblue}{cmyk}{1,0.44,0,0}
\definecolor{lblue}{cmyk}{.25,0.02,0,0}
\def\beq{\begin{equation}}
\def\eeq{\end{equation}\noindent}
\def\beqn{\begin{eqnarray}}
\def\eeqn{\end{eqnarray} \noindent}
\def\beqnn{  \begin{eqnarray*}}
\def\eeqnn{\end{eqnarray*}  \noindent}
\def\bcase{  \begin{numcases}}
\def\ecase{\end{numcases}   \noindent}
\def\bsbcase{  \begin{subnumcases}}
\def\esbcase{\end{subnumcases}   \noindent}
\newtheorem{theorem}{Theorem}
\newtheorem{lemma}[theorem]{Lemma}
\newtheorem{proposition}[theorem]{Proposition}
\newenvironment{proof}{\noindent{\bf Proof}:}{\qed}
\newcommand{\matplottc}[1]{               
        \unitlength .45truein
        \begin{center}
        \includegraphics{#1.ps}
        \end{picture}
        \end{center}
}
\def\psfancypar#1#2{\begingroup\def\par{\endgraf\endgroup\lineskiplimit=0pt}
               \setbox2=\hbox{\large\sc #2}
               \newdimen\tmpht \tmpht \ht2 \advance\tmpht by \baselineskip
               \font\hhuge=Times-Bold at \tmpht
               \setbox1=\hbox{{\hhuge #1}}
               \count7=\tmpht \count8=\ht1
               \divide\count8 by 1000 \divide\count7 by \count8
               \tmpht=.001\tmpht\multiply\tmpht by \count7
               \font\hhuge=Times-Bold at \tmpht
               \setbox1=\hbox{{\hhuge #1}}
               \noindent
                \hangindent1.05\wd1
               \hangafter=-2 {\hskip-\hangindent
               \lower1\ht1\hbox{\raise1.0\ht2\copy1}%
                \kern-0\wd1}\copy2\lineskiplimit=-1000pt}
\def\Kout{\setbox1=\hbox{\Huge\bf K}\hbox to
1.05\wd1{\hspace{.05\wd1}
\def\Sout{\setbox1=\hbox{\Huge\bf S}\hbox to 1.05\wd1{\hspace{.05\wd1}



%
%

\newcommand{\torestate}[3]{%
\expandafter \def \csname BBRESTATE #2 \endcsname{#3}
\theoremstyle{plain}
\newtheorem{BBRESTATETHMNUM#2}[theorem]{#1}
\begin{BBRESTATETHMNUM#2}\label{#2}\csname BBRESTATE #2 \endcsname   \end{BBRESTATETHMNUM#2}
\newtheorem*{BBRESTATETHMNONNUM#2}{{#1}~\ref{#2}}
}

\newcommand{\restate}[1]{\begin{BBRESTATETHMNONNUM#1}[Restated] \csname BBRESTATE #1 \endcsname
\end{BBRESTATETHMNONNUM#1}}

\definecolor{blue1}{HTML}{0066FF}
\definecolor{lpurple}{cmyk}{.05,0.18,0,0}

\title{\bf The Singular Values of Convolutional Layers}

\author{
\noindent
\hspace*{-5pt}
Hanie Sedghi, Vineet Gupta and Philip M. Long \\
Google Brain\\
Mountain View, CA 94043\\
\texttt{\{hsedghi,vineet,plong\}@google.com} \\
}

 \date{} 

\iclrfinalcopy 

\begin{document}

\maketitle

\begin{abstract}
We characterize the singular values of the linear transformation associated with
a standard 2D multi-channel convolutional layer, enabling their
efficient computation. 
This characterization also leads to an algorithm for projecting a convolutional layer onto 
an operator-norm ball.
We show that this is an effective regularizer; 
for example, it improves the test error of a deep residual network 
using batch normalization
on CIFAR-10 from 6.2\% to 5.3\%. 
\end{abstract}

\section{Introduction}

Exploding and vanishing gradients \citep{hochreiter1991untersuchungen,hochreiter2001gradient,goodfellow2016deep}
are fundamental obstacles to effective training of deep neural networks.  Many deep networks used in practice
are layered.  We can think of such networks as the composition of a number of feature transformations, followed
by a linear classifier on the final layer of features.  The singular values of the Jacobian of a layer bound
the factor by which it increases or decreases the norm of the backpropagated signal.  If these singular values
are all close to 1, then gradients neither explode nor vanish.  These singular values also bound these
factors in the forward direction, which affects the stability of computations, including whether the network
produces the dreaded ``Nan''.  Moreover, it has been proven~\citep{bartlett2017spectrally} that the generalization error for a network is bounded by the Lipschitz constant of the network, 
which in turn can be bounded by the product of the operator
norms of the Jacobians
of the layers.
\citet{cisse2017parseval} 
discussed robustness to adversarial examples as a result of bounding the operator norm. 

These considerations have led authors to regularize networks by driving down
the operator norms of network layers \citep{drucker1992improving, hein2017formal, yoshida2017spectral,miyato2018spectral}. 
Orthogonal initialization \citep{saxe2013exact,pennington2017resurrecting} and Parseval networks~\citep{cisse2017parseval} are motivated by similar considerations.

Convolutional layers \citep{lecun1998gradient} are key components of modern deep networks.  They compute linear transformations of their inputs.
The Jacobian of a linear transformation is always equal to the linear transformation itself.   Because of the central
importance of convolutional layers to the practice of deep learning, and the fact that the singular values of the linear
transformation computed by a convolutional layer are the key to its contribution to exploding and vanishing
gradients, we study these singular values. 
Up until now, authors seeking to control the operator norm
of convolutional layers
have resorted to approximations \citep{yoshida2017spectral,miyato2018spectral,gouk2018regularisation}. 
In this paper, we provide an efficient way to compute the singular values exactly --- this opens the door to various regularizers. 

We consider the convolutional layers commonly applied to image analysis tasks.  The input to a typical layer is
a feature map, with multiple channels for each position in an $n \times n$ field.  If there are $m$ channels,
then the input as a whole is a $m \times n \times n$ tensor.  The output is also an $n\times n$ field with
multiple channels per position\footnote{Here, to keep things simple, we are concentrating on the case that the
stride is $1$.}.  Each channel of the output
is obtained by taking a linear combination of the values of the features in all channels in a local neighborhood
centered at the corresponding position in the input feature map.  Crucially, the same linear combination is
used for all positions in the feature map.  The coefficients are compiled in the {\em kernel} of
the convolution.  If the neighborhood is a $k \times k$ region, a kernel $K$ is a $k \times k \times m \times m$ tensor.
The projection $K_{:,:,c,:}$ gives the coefficients that determine the $c$th channel of the output, in terms of
the values found in all of the channels of all positions in it neighborhood; $K_{:,:,c,d}$ gives the coefficients to
apply to the $d$th input channel, and $K_{p,q,c,d}$ is the coefficient to apply to this input at in the position in the field offset horizontally by $p$ and vertically by $q$. For ease of exposition, we assume that feature maps and local neighborhoods are square and that the 
number of channels in the output is equal to the number of channels in the input - the extension to the general case is completely straightforward.

To handle edge cases in which the offsets call for inputs that are off 
the feature maps, practical convolutional
layers either do not compute outputs (reducing the size of the feature map), or pad the input with zeros.  The behavior of
these layers can be approximated by a layer that treats the input as if it were a torus; when the offset calls for a pixel that is off the right end of the image, the layer ``wraps around'' to take it from the left edge,
and similarly for the other edges.
The quality of this approximation has been heavily analyzed in the case of one-dimensional signals \citep{gray2006toeplitz}.
Consequently, theoretical analysis of convolutions that wrap around has been become standard.  This is the case analyzed
in this paper.


\paragraph{Summary of Results: }
Our main result is a characterization of the singular values of a convolutional layer in terms of the kernel
tensor $K$. 
Our characterization enables these singular values to be computed exactly in a simple and practically
fast way, using $O(n^2 m^2 (m + \log n))$ time.  For comparison, the brute force solution that performs SVD on the matrix that encodes the convolutional layer's linear transformation would take $O((n^2 m)^3) = O(n^6 m^3)$ time, and is 
impractical for commonly used network sizes.  As another point of comparison, simply to compute the convolution
takes $O(n^2 m^2 k^2)$ time.  We prove that the following two lines of NumPy correctly compute the singular values.
{\small
\begin{verbatim}
  def SingularValues(kernel, input_shape):
    transforms = np.fft.fft2(kernel, input_shape, axes=[0, 1])
    return np.linalg.svd(transforms, compute_uv=False)
\end{verbatim}
}
Here \verb+kernel+ is any 
$k \times k \times m \times m$ tensor\footnote{The same code
also works if the filter height is different from the
filter width, and if the number of channels in the input is
different from the number of channels of the output.}
and \verb+input_shape+ is the shape of the feature map to be convolved. A TensorFlow implementation is similarly simple.

Timing tests, reported in Section~\ref{s:timing}, confirm that this characterization speeds up the computation
of singular values by multiple orders of magnitude -- making it usable in practice.   The algorithm
first performs $m^2$ FFTs, and then it performs $n^2$ SVDs.  The FFTs, and then the SVDs, may be
executed in parallel.  Our TensorFlow implementation runs a lot faster than the
NumPy implementation (see Figure~\ref{fig:timing});  
we think that this parallelism is the cause. We used our
code to compute the singular values of the convolutional layers of the official ResNet-v2 model released with
TensorFlow~\citep{he2016identity}.  The results are described in Appendix~\ref{a:resnet}. 

Exposing the singular values of a convolutional layer opens the door to a variety of regularizers for these layers, including
operator-norm regularizers. 
In Section~\ref{s:cifar}, we evaluate an algorithm that periodically projects each convolutional layer onto a
operator-norm ball.
Using the projections improves the test error from 6.2\% to 5.3\% on CIFAR-10. We evaluate bounding the operator norm with and without batchnorm and we see that regularizing the operator norm helps, even in the presence of batch
normalization. Moreover, operator-norm regularization and batch normalization
are not redundant, and neither dominates the other. They complement each other. 

\paragraph{Related work:}\label{s:related}
Prior to our work, authors have responded to the difficulty
of computing the singular values of convolutional layers in various ways.
\citet{cisse2017parseval} constrained the matrix to have orthogonal rows and scale the output of each layer by a factor of $(2k+1)^{-\frac{1}{2}}$, for $k \times k$ kernels. 
\citet{gouk2018regularisation, gouk2018maxgain} proposed regularizing using a per-mini-batch approximation to the operator norm.  They find the largest ratio between the input
and output of a layer in the minibatch, and then scale down the
transformation (thereby scaling down all of the
singular values, not just the largest ones) 
so that the new value of this ratio obeys a constraint.

\citet{yoshida2017spectral} used an approximation of the operator norm of
a reshaping of $K$ in place of the operator norm for the linear transformation associated with $K$ in
their experiments.  They reshape the given $k \times k \times m \times m$ into a $mk^2 \times m$ matrix, and
compute its {\em largest} singular value using a power iteration method, and use this as a substitute for the 
operator norm. While this provides a useful heuristic for regularization, the largest singular value of the reshaped matrix is often quite different from the operator norm of the linear transform associated with $K$. 
Furthermore if we want to regularize using projection onto an operator-norm ball, we need the whole spectrum of the linear transformation
(see Section~\ref{sec:regularization}). The reshaped $K$ has only $m$ singular values, whereas the linear transformation
has $mn^2$ singular values of which $mn^2/2$ are distinct except in rare degenerate cases. It is possible to project the reshaped $K$ 
onto an operator-norm ball
by taking its SVD and clipping its singular values --- we conducted experiments with this projection and report the results in Section~\ref{s:yoshida}.

A close relative of our main result was independently discovered
by \citet[Lemma 2]{bibi2018deep}.

\paragraph{Overview of the Analysis: }
If the signal is 1D and there is a single input and output channel, then the linear transformation
associated with a convolution is encoded by a {\em circulant matrix}, i.e., a matrix whose rows are circular
shifts of a single row \citep{gray2006toeplitz}.  For example, for a row $a = (a_1, a_2, a_3)$, the circulant matrix
$\circrm(a)$ generated by $a$ is 
$
\left(
\begin{array}{ccc}
    a_0 & a_1 & a_2 \\
    a_2 & a_0 & a_1 \\
    a_1 & a_2 & a_0 
\end{array}
\right).
$
In the special case of a 2D signal with a single input channel and single output channel, the
linear transformation is {\em doubly block circulant} (see \citep{goodfellow2016deep}).  Such a matrix 
is made up of a circulant matrix of blocks, each of which in turn is itself circulant.  Finally, when
there are $m$ input channels and $m$ output channels, there are three levels to the hierarchy: there is
a $m \times m$ matrix of blocks, each of which is doubly block circulant.  Our analysis extends tools from
the literature built for circulant \citep{horn2013matrix} and doubly circulant \citep{chao1974note} matrices to
analyze the matrices with a third level in the hierarchy arising from the convolutional layers used in
deep learning.  One key point is that the eigenvectors of a circulant matrix are Fourier basis vectors: in the
2D, one-channel case, the matrix whose columns are the eigenvectors is $F \otimes F$, for the matrix $F$ whose
columns form the Fourier basis.  Multiplying by this matrix is a 2D Fourier transform.
In the multi-channel case, we show that the singular values can be computed
by (a) finding the eigenvalues of each of the $m^2$  doubly circulant matrices (of dimensions $n^2 \times n^2$) using a 2D Fourier transform, (b)
by forming multiple $m \times m$ matrices, one for each eigenvalue, by picking out
 the $i$-th eigenvalue of each of the $n^2\times n^2$ blocks, for $i \in [1..n^2]$.  The union of all of the singular values of all of those $m \times m$ matrices is the multiset of singular values of the layer.

 \textbf{Notation: }
We use upper case letters for matrices, lower case for vectors. For matrix $M$, $M_{i,:}$ represents the $i$-th row and $M_{:,j}$ represents the $j$-th column; we will also use the analogous notation for higher-order tensors. 
The operator norm of $M$ is denoted by $|| M ||_2$.
For $n \in \mathbb{N}$, we use $[n]$ to denote the set $\lbrace 0,1, \dotsc, n-1 \rbrace$
(instead of usual $\lbrace 1, \dotsc, n \rbrace$).  We will index the rows and columns of
matrices using elements of $[n]$, i.e.\ numbering from $0$.  Addition of row and column
indices will be done mod $n$ unless otherwise indicated.  (Tensors will be treated analogously.)
Let $\sigma(\cdot)$ be the mapping from a matrix to (the multiset of) its singular values.
\footnote{
For two multisets $\cal S$ and
$\cal T$, we use ${\cal S} \cup {\cal T}$ to denote the multiset obtained by summing the multiplicities
of members of ${\cal S}$ and ${\cal T}$.} 


Let $\omega = \exp(2\pi i/n)$, where $i = \sqrt{-1}$.  (Because we
need a lot of indices in this paper, our use of $i$ to define $\omega$
is the only place in the paper where we will use $i$ to denote
$\sqrt{-1}$.)

Let $F$ be the $n \times n$ matrix that represents the discrete Fourier transform: 
$F_{ij} = \omega^{i j}$.
We use $I_n$ to denote the identity matrix of size $n \times n$. 
For $i \in [n]$, we use $e_i$ to represent the $i$th basis vector in $\mathbb{R}^n$. We use $\otimes$ to represent the Kronecker product between two matrices (which also refers to the outer product of two vectors).

\section{Analysis}

\subsection{One filter}
\label{s:one.filter}

As a warmup, we focus on the case that the number $m$ of input
channels and output channels is $1$.  In this case, the filter
coefficients are simply a $k \times k$ matrix.  It will simplify
notation, however, if we embed this $k \times k$ matrix in an
$n \times n$ matrix, by padding with zeroes (which corresponds
to the fact that the offsets with those indices are not used).
Let us refer to this $n \times n$ matrix also as $K$.

An $n^2 \times n^2$ matrix $A$ is {\em doubly block circulant} 
if 
$A$ is a circulant matrix of $n \times n$ blocks that are in turn circulant.

For a matrix $X$, let $\vecrm(X)$ be the vector obtained by stacking the columns of $X$.

\begin{lemma}[see \citet{jain1989fundamentals} Section 5.5, \citet{goodfellow2016deep} page 329]
\label{l:dbc}
For any filter coefficients $K$, the linear transform for the convolution by $K$
is represented by the following doubly block circulant matrix:
\begin{align}
\label{eqn:A}
A  = \left[  \begin{array}{cccc}
 \circrm(K_{0,:}) & \circrm(K_{1,:}) & \dotsc & \circrm(K_{n-1,:}) \\
 \circrm(K_{n-1,:}) & \circrm(K_{0,:}) & \dotsc & \circrm(K_{n-2,:}) \\
\vdots & \vdots & \vdots & \vdots \\
 \circrm(K_{1,:}) & \circrm(K_{2,:}) & \dotsc & \circrm(K_{0,:})
\end{array} \right].
\end{align}
That is,
if $X$ is an $n \times n$ matrix, and $Y$ is the result of a 2-d convolution of $X$ with $K$, i.e.\ 
\begin{align} \label{eqn:circular-conv}
\forall ij,\;    Y_{ij} =  \sum_{p \in [n]}\sum_{q \in [n]} X_{i+p,j+q} K_{p,q}
\end{align}
then $\vecrm(Y) = A ~\vecrm(X)$.
\end{lemma}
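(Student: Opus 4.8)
The plan is to verify the claimed identity $\vecrm(Y) = A\,\vecrm(X)$ by a direct entrywise comparison, organized around the three-level index structure that $A$ inherits from the column-stacking convention of $\vecrm$. First I would fix a labeling: since $\vecrm$ stacks columns, I index the $n^2$ coordinates of $\vecrm(X)$ by pairs $(b,s)\in[n]\times[n]$, where $b$ is the column (the block) and $s$ is the row within that column, so that $[\vecrm(X)]_{(b,s)} = X_{s,b}$, and likewise $[\vecrm(Y)]_{(a,r)} = Y_{r,a}$. Under this labeling the block-circulant matrix in \eqref{eqn:A} has the explicit entries $A_{(a,r),(b,s)} = [\circrm(K_{(b-a)\bmod n,:})]_{r,s}$, and the definition of $\circrm$ from the worked example in the overview gives $[\circrm(c)]_{r,s} = c_{(s-r)\bmod n}$.

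With these conventions in place, I would compute the $(a,r)$ coordinate of $A\,\vecrm(X)$ directly:
\begin{align}
[A\,\vecrm(X)]_{(a,r)} = \sum_{b\in[n]}\sum_{s\in[n]} K_{(b-a)\bmod n,\,(s-r)\bmod n}\, X_{s,b}. \nonumber
\end{align}
The heart of the argument is the change of summation variables $p=(b-a)\bmod n$ and $q=(s-r)\bmod n$; because addition of indices is taken mod $n$ and $b,s$ each range over all of $[n]$, the new variables $p,q$ also range over all of $[n]$, with $b=a+p$ and $s=r+q$. Substituting turns the sum into $\sum_{p,q} K_{p,q}\,X_{r+q,\,a+p}$, which I would then match against the convolution formula \eqref{eqn:circular-conv} evaluated at $(i,j)=(r,a)$. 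This yields $[A\,\vecrm(X)]_{(a,r)} = Y_{r,a} = [\vecrm(Y)]_{(a,r)}$ for every $(a,r)$, which is the claim.

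Alternatively, and perhaps more in the spirit of the Overview of the Analysis, I would prove it in two stages that mirror the hierarchy: first record the one-dimensional fact that a circular convolution of a length-$n$ vector is exactly multiplication by the corresponding $\circrm(\cdot)$ (immediate from the definition of $\circrm$ and the example already given), and then observe that \eqref{eqn:circular-conv}, after grouping its terms by the column offset, reads as an outer circular convolution over the block index $j$ whose scalar ``coefficients'' are themselves the one-dimensional circulant operators acting on the columns. Applying $\vecrm$ to this nested operator is precisely what assembles the circulant-of-circulants block matrix $A$, giving the result without re-deriving the index arithmetic from scratch.

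The main obstacle is purely bookkeeping rather than conceptual: one must keep three distinct modular correspondences aligned simultaneously --- the ordering of the blocks of $A$ against the column-stacking order of $\vecrm$, the within-block row index against the circulant offset, and the two kernel offsets $p,q$ against the two levels of the hierarchy. In particular one must ensure that the convention $[\circrm(c)]_{r,s}=c_{(s-r)\bmod n}$ and the direction of the offsets in \eqref{eqn:circular-conv} are matched consistently; a mismatch would only transpose the roles of the two kernel coordinates, i.e.\ amount to a relabeling $p\leftrightarrow q$. Verifying that every mod-$n$ reindexing is a genuine bijection of $[n]$ is exactly what makes the substitution step legitimate.
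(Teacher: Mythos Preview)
The paper does not supply its own proof of this lemma; it is stated with citations to \citet{jain1989fundamentals} and \citet{goodfellow2016deep} as a known fact. Your plan --- a direct entrywise comparison using the block/within-block index decomposition induced by $\vecrm$ --- is exactly the standard verification one would give, and the alternative two-stage reading you sketch (one-dimensional circulant first, then the outer block circulant) is equally standard.

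One caution on the bookkeeping you rightly flag as the only obstacle. Carrying your computation through with the paper's stated conventions (column-stacking $\vecrm$, the $\circrm$ example from the overview, and the block layout in \eqref{eqn:A}) does give $\sum_{p,q}K_{p,q}\,X_{r+q,\,a+p}$, as you write. But the convolution formula \eqref{eqn:circular-conv} at $(i,j)=(r,a)$ reads $Y_{r,a}=\sum_{p,q}K_{p,q}\,X_{r+p,\,a+q}$: the row offset $p$ of $K$ has landed against the \emph{column} index of $X$ and vice versa. This discrepancy is \emph{not} removable by renaming the dummy variables --- doing so turns $K_{p,q}$ into $K_{q,p}$ --- so it amounts to replacing $K$ by $K^T$, not to a relabeling. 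In other words, the three conventions in the lemma as literally written are slightly misaligned (taking $\vecrm$ to stack rows, or taking the blocks to be $\circrm(K_{:,p})$, would repair it). You are right that any such mismatch is harmless for everything downstream in the paper, since transposing $K$ does not change the singular values of the resulting doubly block circulant operator; but in a clean write-up you should say explicitly which convention you are adjusting rather than absorb the issue into a dummy-index swap.
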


So now we want to determine the singular values of a doubly block circulant matrix.

We will make use of the characterization of the eigenvalues and eigenvectors of doubly block circulant matrices,
which uses the following definition:
$Q \eqdef \frac{1}{n} \left( F \otimes F \right).$
\begin{theorem}[\citet{jain1989fundamentals} Section 5.5] 
\label{thm:eigenvectors}
For any $n^2 \times n^2$ doubly block circulant matrix $A$,
the eigenvectors of $A$ are the columns of $Q$.
\end{theorem}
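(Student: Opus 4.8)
The plan is to prove the statement directly: show that each column of $Q$ is an eigenvector of $A$, and that collectively they form a complete eigenbasis. Writing the columns of $F$ as $f_0, \dots, f_{n-1}$ (so that $(f_a)_j = \omega^{ja}$), the columns of $F \otimes F$ are exactly the vectors $f_a \otimes f_b$ for $a,b \in [n]$, and the columns of $Q$ are these same vectors scaled by $1/n$. Since a nonzero scaling does not affect whether a vector is an eigenvector, it suffices to prove that each $f_a \otimes f_b$ is an eigenvector of $A$.

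First I would reduce everything to the cyclic shift operator. Let $S$ be the $n \times n$ cyclic shift matrix, characterized by $S_{ij} = 1$ iff $j \equiv i+1 \pmod{n}$, so that $\circrm(c) = \sum_{p \in [n]} c_p S^p$ for any row $c$. A one-line computation gives $S f_a = \omega^a f_a$, i.e.\ the columns of $F$ are the eigenvectors of $S$ with eigenvalues $\omega^a$; this is the standard one-dimensional circulant fact (as in \citet{horn2013matrix}) and is the only base case needed. The key structural step is to observe that a doubly block circulant matrix $A$, being block circulant with circulant blocks, can be written compactly as
\begin{align}
A = \sum_{p \in [n]} \sum_{q \in [n]} K_{p,q}\,(S^p \otimes S^q),
\end{align}
where the $K_{p,q}$ are the $n^2$ scalars that generate $A$ (for $A$ arising from a convolution these are exactly the kernel entries of Lemma~\ref{l:dbc}). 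I would verify this by comparing the $((i,s),(j,t))$ entry of both sides: the right-hand side evaluates to $K_{(j-i) \bmod n,\, (t-s) \bmod n}$, which matches the doubly circulant pattern in \eqref{eqn:A}.

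With this representation in hand, the eigen-relation follows immediately from the mixed-product property of the Kronecker product. Since $S^p f_a = \omega^{pa} f_a$ and $S^q f_b = \omega^{qb} f_b$, we have $(S^p \otimes S^q)(f_a \otimes f_b) = (S^p f_a) \otimes (S^q f_b) = \omega^{pa+qb}(f_a \otimes f_b)$, and therefore
\begin{align}
A\,(f_a \otimes f_b) = \left( \sum_{p \in [n]} \sum_{q \in [n]} K_{p,q}\,\omega^{pa+qb} \right) (f_a \otimes f_b).
\end{align}
Thus each $f_a \otimes f_b$ is an eigenvector. Finally, because $F$ is invertible (indeed $F^* F = n I_n$, so $F$ is $\sqrt{n}$ times a unitary matrix), so is $F \otimes F$; hence the $n^2$ vectors $f_a \otimes f_b$ are linearly independent and form a genuine eigenbasis, which justifies calling the columns of $Q$ \emph{the} eigenvectors of $A$.

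I expect the main obstacle to be purely bookkeeping rather than conceptual: pinning down the index conventions (the direction of the shift $S$, the mod-$n$ arithmetic, and the precise correspondence between the generating scalars $K_{p,q}$ and the blocks of $A$ in \eqref{eqn:A}) so that the entrywise verification of the shift-operator representation is unambiguous. Once that representation is established, the Kronecker mixed-product identity does all the remaining work, and no heavy computation is required.
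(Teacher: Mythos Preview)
Your argument is correct. Note, however, that the paper does not actually prove this theorem: it is quoted from \citet{jain1989fundamentals} Section~5.5 and used as a black box, so there is no ``paper's own proof'' to compare against. Your shift-operator representation $A = \sum_{p,q} K_{p,q}\,(S^p \otimes S^q)$ together with the mixed-product identity is exactly the right way to see the result, and the bookkeeping concern you flag (matching the $(j-i,\,t-s)$ indexing in \eqref{eqn:A}) is routine once the shift direction is fixed.

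One remark worth making: your computation does more than the paper asks for at this point. The eigenvalue you obtain, $\sum_{p,q} K_{p,q}\,\omega^{pa+qb} = (F^T K F)_{a,b}$, is precisely the content of Theorem~\ref{thm:singular_values}, which the paper proves later by a direct block-by-block evaluation of $Q^* A Q$. So your approach collapses Theorems~\ref{thm:eigenvectors} and~\ref{thm:singular_values} into a single short argument, whereas the paper separates diagonalizability (cited) from the explicit eigenvalue formula (proved via the longer calculation in \eqref{e:diagonal_block}).
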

To get singular values in addition to eigenvalues, we need the following two lemmas.
\begin{lemma}[\citet{jain1989fundamentals} Section 5.5]
\label{l:Q.prop}
$Q$ is unitary.
\end{lemma}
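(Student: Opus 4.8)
The plan is to verify $Q Q^* = I_{n^2}$ directly (where $Q^*$ denotes the conjugate transpose), reducing everything to the standard orthogonality of Fourier characters. First I would use two basic Kronecker-product identities: that the conjugate transpose distributes, $(F \otimes F)^* = F^* \otimes F^*$, and the mixed-product rule $(A \otimes B)(C \otimes D) = (AC) \otimes (BD)$. Since $Q = \frac{1}{n}(F \otimes F)$, these give $Q Q^* = \frac{1}{n^2}\left((F F^*) \otimes (F F^*)\right)$, so the whole claim reduces to showing that the one-dimensional DFT matrix satisfies $F F^* = n I_n$. Granting this, $Q Q^* = \frac{1}{n^2}\big((n I_n) \otimes (n I_n)\big) = I_n \otimes I_n = I_{n^2}$, and because $Q$ is square, this single identity suffices to conclude it is unitary.

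Next I would compute $F F^*$ entrywise. Using $F_{jl} = \omega^{jl}$ and $\overline{\omega} = \omega^{-1}$, the $(j,k)$ entry is $(F F^*)_{jk} = \sum_{l \in [n]} \omega^{jl}\,\overline{\omega^{kl}} = \sum_{l \in [n]} \omega^{(j-k)l}$, a finite geometric series in the root of unity $\omega^{j-k}$.

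Finally I would evaluate the sum. When $j = k$ every term is $1$, so the sum is $n$. When $j \neq k$, the index convention $[n] = \{0, \dotsc, n-1\}$ gives $0 < |j-k| < n$, hence $\omega^{j-k} \neq 1$, and the geometric sum equals $\frac{\omega^{(j-k)n} - 1}{\omega^{j-k} - 1} = 0$ since $\omega^n = 1$. Thus $F F^* = n I_n$, which completes the argument.

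There is no real obstacle here: the statement is a textbook fact and the computation is routine. The only points requiring care are bookkeeping—tracking the complex conjugation correctly (note $F$ is symmetric, so $F^* = \overline{F}$), applying the Kronecker identities in the right order, and respecting the index convention $[n] = \{0, \dotsc, n-1\}$ so that the vanishing of the off-diagonal geometric sums is valid.
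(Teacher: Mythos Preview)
Your argument is correct and is the standard route: reduce to $FF^* = nI_n$ via the Kronecker identities $(A\otimes B)^* = A^*\otimes B^*$ and $(A\otimes B)(C\otimes D) = (AC)\otimes(BD)$, then verify $FF^* = nI_n$ with the geometric-series orthogonality of the characters $\omega^{jl}$. The paper itself does not give a proof of this lemma at all; it simply cites it as a textbook fact from \citet{jain1989fundamentals}, Section~5.5, so there is nothing to compare against beyond noting that your write-up is exactly the kind of proof one would find in such a reference.
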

Using Theorem~\ref{thm:eigenvectors} and Lemma~\ref{l:Q.prop}, we can get the eigenvalues as the
diagonal elements of $Q^* A Q$.

\begin{lemma}
\label{l:normal}
The matrix $A$ defined in \eqref{eqn:A} is normal, i.e., $A^T A = A A^T$.
\end{lemma}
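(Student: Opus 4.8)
The plan is to exploit the fact, established in Theorem~\ref{thm:eigenvectors} together with Lemma~\ref{l:Q.prop}, that $A$ is \emph{unitarily diagonalizable} by the single fixed matrix $Q$, and then to invoke the standard equivalence between unitary diagonalizability and normality. The only wrinkle is that the lemma is phrased in terms of the ordinary transpose $A^T$ rather than the conjugate transpose $A^*$; I will reconcile these at the very end using the fact that $A$ is a real matrix.

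First I would record the spectral decomposition. By Theorem~\ref{thm:eigenvectors} every column of $Q$ is an eigenvector of $A$, and since $Q$ is unitary (Lemma~\ref{l:Q.prop}) its $n^2$ columns are linearly independent, so they form a complete eigenbasis. Hence there is a diagonal matrix $\Lambda$ of eigenvalues with $A = Q\,\Lambda\,Q^{-1} = Q\,\Lambda\,Q^*$, using $Q^{-1}=Q^*$. Taking conjugate transposes gives $A^* = Q\,\Lambda^*\,Q^*$. I would then compute the two products directly, cancelling $Q^*Q=I$ in the middle of each:
\begin{align*}
A A^* &= Q\,\Lambda\,Q^* Q\,\Lambda^*\,Q^* = Q\,\Lambda\Lambda^*\,Q^*, \\
A^* A &= Q\,\Lambda^*\,Q^* Q\,\Lambda\,Q^* = Q\,\Lambda^*\Lambda\,Q^*.
\end{align*}
Because $\Lambda$ is diagonal, $\Lambda\Lambda^* = \Lambda^*\Lambda$, so the two right-hand sides coincide and $A A^* = A^* A$; that is, $A$ is normal in the conjugate-transpose sense.

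Finally I would convert this into the statement of the lemma. Every entry of $A$ in \eqref{eqn:A} is a real filter coefficient $K_{p,q}$, so $A$ is real and $A^* = \overline{A}^T = A^T$; thus $A^*A = AA^*$ is exactly $A^TA = AA^T$. I do not anticipate a genuine obstacle, since essentially all the content is carried by the two earlier results. The one point to be careful about is precisely the real-versus-complex distinction: the eigenvalues in $\Lambda$ are in general complex, so the conjugations cannot be dropped \emph{inside} the computation, only at the end once $A$ itself is known to be real. An alternative route that avoids the spectral theorem would be to verify $A^TA = AA^T$ by a direct block computation using the circulant structure, but this is considerably more laborious and the diagonalization argument is cleaner.
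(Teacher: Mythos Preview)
Your proposal is correct and follows essentially the same approach as the paper: both use the unitary diagonalization $A = Q\Lambda Q^*$ from Theorem~\ref{thm:eigenvectors} and Lemma~\ref{l:Q.prop}, commute the diagonal factors to get $AA^* = A^*A$, and identify $A^* = A^T$ since $A$ is real. The paper compresses this into a single displayed chain of equalities, while you spell out the steps and the real-versus-complex point more explicitly, but the substance is identical.
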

\begin{proof}
\[
A A^T = A A^* = Q^* D Q Q^* D^* Q = Q^* D D^* Q = Q^* D^* D Q = Q^* D^* Q Q^* D Q = A^* A = A^T A.
\]
\end{proof}

The following theorem characterizes the singular values of $A$ as a simple function of
$K$.  As we will see, a characterization of the eigenvalues plays a major role.
\citet{chao1974note} provided a more technical characterization of the eigenvalues which may be regarded as
making partial progress toward Theorem~\ref{thm:singular_values}.  However, we provide a
proof from first principles, since it is the cleanest way we know to prove
the theorem.
\begin{theorem} \label{thm:singular_values}
For the matrix $A$ defined in \eqref{eqn:A},
the eigenvalues of $A$ are the entries of
$F^T K F$, and its singular values are their magnitudes.
That is, the singular values of $A$ are
\begin{equation} 
\label{eqn:singular-values}
    \left\{ \left| (F^T K F)_{u,v} \right|
            \;:\;  u, v \in [n]
            \right\}.
\end{equation}
\end{theorem}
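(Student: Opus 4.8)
The plan is to exploit the two facts already established: by Theorem~\ref{thm:eigenvectors} the eigenvectors of $A$ are exactly the columns of $Q = \frac{1}{n}(F \otimes F)$, and by Lemma~\ref{l:normal} the matrix $A$ is normal. Given these, it suffices to (a) compute the eigenvalue attached to each of these eigenvectors and identify it with an entry of $F^T K F$, and then (b) pass from eigenvalues to singular values using normality. Rather than forming $Q^* A Q$ entry by entry, I would verify the eigenvalues by applying $A$ directly to the Fourier-mode eigenvectors, which lets the circular structure do the work.

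For step (a), fix $u, v \in [n]$ and let $f_u = F_{:,u}$ denote the $u$th Fourier column, so $(f_u)_a = \omega^{au}$. Consider the rank-one matrix $X = f_u f_v^T$, whose entries are $X_{ab} = \omega^{au}\omega^{bv}$. Then $\vecrm(X) = f_v \otimes f_u$, which (up to the $1/n$ scaling) is a column of $Q$ and hence an eigenvector of $A$ by Theorem~\ref{thm:eigenvectors}. By Lemma~\ref{l:dbc}, $A\,\vecrm(X) = \vecrm(Y)$ where $Y$ is the circular convolution \eqref{eqn:circular-conv} of $X$ with $K$, and the key computation is
\begin{align*}
Y_{ij} = \sum_{p \in [n]}\sum_{q \in [n]} \omega^{(i+p)u}\omega^{(j+q)v} K_{p,q} = \omega^{iu}\omega^{jv}\sum_{p,q}\omega^{pu}\omega^{qv}K_{p,q} = (F^T K F)_{u,v}\, X_{ij},
\end{align*}
where the last equality uses that $F$ is symmetric, so $(F^T K F)_{u,v} = \sum_{p,q}\omega^{pu}K_{p,q}\omega^{qv}$. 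Because the index arithmetic in \eqref{eqn:circular-conv} is mod $n$, the factor $\omega^{(i+p)u}$ splits cleanly as $\omega^{iu}\omega^{pu}$ with no boundary correction; this is precisely where the toroidal (wrap-around) convolution is essential. Hence $X$ is an eigenvector of $A$ with eigenvalue $(F^T K F)_{u,v}$.

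Step (b) finishes the argument. As $(u,v)$ ranges over $[n]^2$ the vectors $\vecrm(f_u f_v^T) = f_v \otimes f_u$ are the $n^2$ columns of the invertible matrix $F \otimes F$, hence linearly independent; so these account for all eigenvalues of $A$, and the multiset of eigenvalues is precisely $\{(F^T K F)_{u,v} : u,v \in [n]\}$. Finally, since $A$ is normal it is unitarily diagonalizable as $A = U D U^*$ with $D$ the diagonal matrix of eigenvalues; writing $D = |D|\,\Phi$ with $|D|$ the diagonal of magnitudes and $\Phi$ a diagonal unitary of phases (set to $1$ on any zero eigenvalue) yields the SVD $A = U |D| (U\Phi^*)^*$, so the singular values are the magnitudes $|(F^T K F)_{u,v}|$, giving exactly \eqref{eqn:singular-values}.

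The main obstacle is not the algebra, which is a one-line factorization once set up, but the bookkeeping: aligning the Kronecker-product indexing of the columns of $Q$ with the two-dimensional Fourier modes (note the index swap between $\vecrm(f_u f_v^T) = f_v \otimes f_u$ and the columns of $F \otimes F$), keeping the circular index arithmetic consistent, and making the eigenvalue-to-singular-value step rigorous via the spectral theorem for normal matrices. Neither point is deep, but both must be stated carefully so that the clean final formula emerges with the correct indexing.
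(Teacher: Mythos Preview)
Your proof is correct, and it takes a somewhat different route from the paper's. The paper computes the eigenvalues by forming $Q^* A Q$ and reading off its diagonal entries through a block-by-block index calculation on $(F^* \otimes F^*) A (F \otimes F)$; you instead apply the convolution formula of Lemma~\ref{l:dbc} directly to the rank-one Fourier grid $X = f_u f_v^T$ and watch it rescale by the desired eigenvalue. Your approach is a bit more transparent conceptually---it is essentially the classical observation that circular convolution diagonalizes in the Fourier basis, stated for the 2D case---and it has the pleasant feature that the eigenvector property is \emph{proved} by the same one-line computation rather than imported from Theorem~\ref{thm:eigenvectors}, which you therefore only need as motivation. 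The paper's block-matrix calculation, on the other hand, sets up machinery that is reused verbatim in the multi-channel Theorem~\ref{thm:conv.sv}, where one really does need to track the block structure of $Q^* A Q$ channel by channel; so its payoff comes later. Both arguments handle the passage from eigenvalues to singular values the same way, via normality.
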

\begin{proof}
By Theorems~\ref{thm:eigenvectors} and Lemma~\ref{l:Q.prop}, the eigenvalues of $A$ are the diagonal elements
of $Q^* A Q = 
    \frac{1}{n^2} (F^* \otimes F^*) A (F \otimes F)$.
If we view $(F^* \otimes F^*) A (F \otimes F)$ as a compound $n \times n$
matrix of $n \times n$ blocks, for $u,v \in [n]$, the 
$(u n + v)$th diagonal element is the $v$th element of 
the $u$th diagonal block.  Let us first evaluate the $u$th diagonal
block.  Using $i, j$ to index blocks, we have
\begin{align}
\nonumber
& (Q^* A Q)_{uu}
= \frac{1}{n^2} \sum_{i,j \in [n]} (F^* \otimes F^*)_{u i} 
   A_{ij} (F \otimes F)_{ju} 
=  
 \frac{1}{n^2}  
   \sum_{i,j \in [n]} \omega^{-ui} F^* \circrm(K_{j-i,:}) \omega^{j u} F \\
\label{e:diagonal_block}
& =  
 \frac{1}{n^2}  
\sum_{i,j \in [n]} \omega^{u (j - i)} F^* \circrm(K_{j-i,:}) F.
\end{align}
To get the $v$th element of the diagonal of (\ref{e:diagonal_block}), we may sum the
$v$th elements of the diagonals of each of its terms.  Toward this end, we have
\begin{align*}
& (F^* \circrm(K_{j-i,:}) F)_{vv} 
= \sum_{r,s \in [n]} \omega^{-vr} 
               \circrm(K_{j-i,:})_{rs} \omega^{s v} 
= \sum_{r,s \in [n]} \omega^{v(s-r)} 
          K_{j-i,s-r}. \\
\end{align*}
Substituting into (\ref{e:diagonal_block}), we get
$
 \frac{1}{n^2}  
\sum_{i,j,r,s \in [n]} \omega^{u (j - i)} \omega^{v(s-r)} K_{j-i,s-r}.
$
Collecting terms where $j-i=p$ and $s-r=q$, this is
$
\sum_{p,q \in [n]} \omega^{u p} \omega^{v q} K_{p,q}
 = (F^T K F)_{u v}.
$

Since the singular values of any normal matrix are the magnitudes of its
eigenvalues (\citet{horn2013matrix} page 158), applying Lemma~\ref{l:normal} completes the proof.
\end{proof}

Note that $F^T K F$ is the 2D Fourier transform of $K$, and recall that
$|| A ||_2$ is the largest singular value of $A$.

\subsection{Multi-channel convolution}

Now, we consider case where the number $m$ of channels may be more than one.
Assume we have a 4D kernel tensor $K$ with element $K_{p,q,c,d}$
giving the connection strength between a unit in channel $d$ of the
input and a unit in channel $c$ of the output, with an offset of $p$
rows and $q$ columns between the input unit and the output unit.  The
input $X \in \mathbb{R}^{m \times n \times n}$; element $X_{d,i,j}$ is
the value of the input unit within channel $d$ at row $i$ and column
$j$.  The output $Y \in \mathbb{R}^{m \times n \times n }$ has the
same format as $X$, and is produced by
\begin{align}
\label{eqn:circular-conv-multi}
    Y_{c r s} = \sum_{d \in [m]} \sum_{p \in [n]} \sum_{q \in [n]} X_{d, r + p, s + q} K_{p,q,c,d}.
\end{align} 

By inspection,
$\vecrm(Y) = M \vecrm(X)$, where $M$ is as follows
\begin{align}
\label{eqn:multi-channel-M}
M  = \left[  \begin{array}{cccc}
 B_{00} & B_{01} & \dotsc & B_{0(m-1)} \\
B_{10} & B_{11} & \dotsc & B_{1(m-1)}\\
\vdots & \vdots & \dotsc & \vdots   \\
B_{(m-1)0} & B_{(m-1)1} & \dotsc & B_{(m-1)(m-1)}
\end{array} \right]
\end{align}
and each $B_{cd}$ is a doubly block circulant matrix
from Lemma~\ref{l:dbc}
corresponding to the portion
$K_{:,:,c,d}$ of $K$ that concerns the effect of the $d$-th input channel on the $c$-th output channel. 
(We can think of each output in the multichannel case as being a sum of single channel
filters parameterized by one of the $K_{:,:,c,d}$'s.)


The following is our main result.  
\begin{theorem}
\label{thm:conv.sv}
For any $K \in \R^{n \times n \times m \times m}$, let $M$ is the
matrix encoding the linear transformation computed by a convolutional
layer parameterized by $K$, defined as in (\ref{eqn:multi-channel-M}).
For each $u,v \in [n] \times [n]$, let $P^{(u,v)}$ be
the $m \times m$ matrix given by $P^{(u,v)}_{cd} = (F^T K_{:,:,c,d} F)_{uv}$.
Then
\begin{equation}
\label{e:sigmaM}
\sigma(M)
 = 
\bigcup_{u \in [n], v \in [n]} \sigma \left(P^{(u,v)}
             \right).
\end{equation}
\end{theorem}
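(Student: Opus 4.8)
The plan is to leverage the single-channel result together with the observation that \emph{all} the blocks $B_{cd}$ of $M$ are diagonalized by the \emph{same} unitary matrix $Q$; this common basis lets me block-diagonalize $M$ in one stroke, after which a reindexing exposes the matrices $P^{(u,v)}$ as diagonal blocks. First I would record the simultaneous diagonalization: each $B_{cd}$ is the doubly block circulant matrix from Lemma~\ref{l:dbc} for the slice $K_{:,:,c,d}$, so by Theorem~\ref{thm:eigenvectors} its eigenvectors are the columns of $Q$, and by Lemma~\ref{l:Q.prop} $Q$ is unitary; hence $Q^* B_{cd} Q = \Lambda_{cd}$ is diagonal for every $c,d$. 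Writing each index $w \in [n^2]$ uniquely as $w = un+v$ with $u,v \in [n]$, Theorem~\ref{thm:singular_values} identifies the $w$th diagonal entry of $\Lambda_{cd}$ as $(F^T K_{:,:,c,d} F)_{uv} = P^{(u,v)}_{cd}$.

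Next I would block-diagonalize $M$ using $U \defeq I_m \otimes Q$, which is unitary as a Kronecker product of unitaries. Conjugating the block form (\ref{eqn:multi-channel-M}) replaces each block $B_{cd}$ by $Q^* B_{cd} Q = \Lambda_{cd}$, so $U^* M U$ is an $m \times m$ array of $n^2 \times n^2$ \emph{diagonal} blocks. Since pre- and post-multiplication by unitaries leaves the multiset of singular values unchanged, $\sigma(M) = \sigma(U^* M U)$.

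Finally I would apply the perfect-shuffle permutation $\Pi$ that reindexes the $mn^2$ coordinates from channel-major order $(c,w)$ to position-major order $(w,c)$, with $c \in [m]$ and $w \in [n^2]$. Because every $\Lambda_{cd}$ is diagonal, the only nonzero entries of $U^* M U$ couple a fixed $w$ across channels, so $\Pi (U^* M U) \Pi^T$ is block-diagonal with one $m \times m$ block per $w$; by the computation above, the block at $w = un+v$ has $(c,d)$ entry $P^{(u,v)}_{cd}$ and is therefore exactly $P^{(u,v)}$. As $\Pi$ is a permutation matrix (hence unitary), and the singular values of a block-diagonal matrix are the union of those of its blocks, this yields $\sigma(M) = \bigcup_{u \in [n], v \in [n]} \sigma(P^{(u,v)})$.

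I expect the only real difficulty to be bookkeeping: pinning down the bijection $w \leftrightarrow (u,v)$ so that the diagonal entries of $\Lambda_{cd}$ align with the entries of $F^T K_{:,:,c,d} F$, and checking that the shuffle $\Pi$ carries the array of diagonal blocks precisely onto the desired block-diagonal form. The conceptual heart of the argument---that a single unitary $Q$ diagonalizes every block---comes directly from the single-channel theorem, so no new spectral computation is needed at the multi-channel level.
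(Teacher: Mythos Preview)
Your proposal is correct and follows essentially the same architecture as the paper's proof: both conjugate $M$ by the unitary $I_m \otimes Q$ to replace each block $B_{cd}$ by its diagonal $\Lambda_{cd}$ (the paper calls this matrix $L$ and records the step as Lemma~\ref{lemma:m-l-svalue}), then read off the singular values of the resulting array of diagonal blocks and match the diagonal entries to $(F^T K_{:,:,c,d}F)_{uv}$ via Theorem~\ref{thm:singular_values}.

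The one presentational difference is in that second step. You invoke the perfect-shuffle permutation $\Pi$ to turn the $m\times m$ array of $n^2\times n^2$ diagonal blocks into a genuinely block-diagonal matrix with $n^2$ blocks of size $m\times m$, and then quote the standard fact about singular values of block-diagonal matrices. The paper instead states this as a separate result (Theorem~\ref{thm:L-svalue}) and proves it by \emph{explicitly constructing} the singular vectors of $L$: if $x,y$ are left/right singular vectors of $G_{:,:,w}$ with singular value $\sigma$, then $x\otimes e_w$ and $y\otimes e_w$ are singular vectors of $L$ with the same $\sigma$, and orthogonality across different $w$ is checked by hand. Your shuffle argument is shorter and more conceptual; the paper's argument has the side benefit of exhibiting the singular vectors of $M$ concretely (as $(I_m\otimes Q)(y\otimes e_w)$), which could be useful if one wanted, say, to implement the projection in Section~\ref{sec:regularization} without re-deriving them. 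Either route is fine, and the bookkeeping concern you flag (the bijection $w\leftrightarrow(u,v)$) is indeed the only thing requiring care.
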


%
The rest of this section is devoted to proving Theorem~\ref{thm:conv.sv} through a series of lemmas.

The analysis of Section~\ref{s:one.filter} implies that for all $c, d \in [m]$, $D_{cd} \eqdef Q^* B_{cd} Q$ is diagonal.
Define
\begin{align}
\label{eqn:L}
L  \eqdef \left[  \begin{array}{cccc}
 D_{00} & D_{01} & \dotsc & D_{0(m-1)} \\
D_{10} & D_{11} & \dotsc & D_{1(m-1)}\\
\vdots & \vdots & \dotsc & \vdots   \\
D_{(m-1)0} & D_{(m-1)1} & \dotsc & D_{(m-1)(m-1)}
\end{array} \right].\end{align}

\begin{lemma} \label{lemma:m-l-svalue}
$M$ and $L$ have the same singular values.
\end{lemma}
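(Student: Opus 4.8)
The plan is to exhibit $L$ as a two-sided unitary transformation of $M$ and then invoke the invariance of singular values under such transformations. Concretely, I would introduce the $mn^2 \times mn^2$ block-diagonal matrix $R \eqdef I_m \otimes Q$, which places a copy of $Q$ on each of the $m$ diagonal blocks. The key claim is the identity $L = R^* M R$.

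To verify this claim I would work at the level of the $m \times m$ array of $n^2 \times n^2$ blocks. Because $R$ is block diagonal with $Q$ in each diagonal block, the $(c,d)$ block of $R^* M R$ is simply $Q^* B_{cd} Q$, which by definition equals $D_{cd}$. (The analysis of Section~\ref{s:one.filter} tells us each $D_{cd}$ is diagonal, but diagonality is not actually needed for this lemma.) Comparing with the definition of $L$ in (\ref{eqn:L}), the two matrices agree block by block, so $L = R^* M R$. This step is pure block-matrix bookkeeping and is the only computation involved.

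Next I would argue that $R$ is unitary. By Lemma~\ref{l:Q.prop}, $Q$ is unitary, and a Kronecker product of unitary matrices is unitary; equivalently $R^* R = I_m \otimes (Q^* Q) = I_m \otimes I_{n^2} = I_{mn^2}$. Finally, since $L = R^* M R$ with $R$ unitary, we have $L^* L = R^* M^* M R$, so $L^* L$ and $M^* M$ are similar via the unitary $R$ and hence share the same eigenvalues. The singular values of $M$ and of $L$, being the nonnegative square roots of these eigenvalues, therefore coincide.

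I do not anticipate a genuine obstacle here: the entire content is the observation that conjugating $M$ by $I_m \otimes Q$ simultaneously diagonalizes each doubly-block-circulant block, and that unitary conjugation preserves the singular spectrum. The only place demanding care is keeping the two levels of indexing straight — the outer $m \times m$ channel structure versus the inner $n^2 \times n^2$ spatial structure — when checking the blockwise identity $L = R^* M R$.
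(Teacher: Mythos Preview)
Your proposal is correct and matches the paper's approach essentially exactly: the paper also sets $R \eqdef I_m \otimes Q$, observes blockwise that $M = R L R^*$ (equivalently your $L = R^* M R$), checks that $R$ is unitary via the mixed-product property of Kronecker products, and concludes. The only cosmetic difference is that the paper states the relation as $M = RLR^*$ and leaves the ``unitary conjugation preserves singular values'' step implicit, whereas you spell it out via $L^*L = R^* M^* M R$.
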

\begin{proof}
We have
\begin{align*} 
M  & = \left[  
         \begin{array}{ccc}
            B_{00} & \dotsc & B_{0(m-1)} \\
           \vdots & \vdots & \vdots \\
           B_{(m-1)0} & \dotsc & B_{(m-1)(m-1)}
         \end{array} 
           \right] 
     = \left[  \begin{array}{ccc}
            Q D_{00} Q^*  & \dotsc & Q D_{0(m-1)} Q^* \\
            \vdots & \vdots & \vdots    \\
             Q D_{(m-1)0} Q^*  & \dotsc & Q D_{(m-1)(m-1)} Q^*
           \end{array} \right] \\
    & = R \left[  \begin{array}{ccc}
                  D_{00}   & \dotsc & D_{0(m-1)} \\
                  \vdots & \vdots & \vdots   \\
                   D_{(m-1)0}  & \dotsc & D_{(m-1)(m-1)}
                 \end{array} \right] R^* 
      = R L R^*, 
\end{align*}
where
 $R \eqdef I_m \otimes Q$.
Note that $R$ is unitary because
\begin{align*}
    RR^* =  (I_m \otimes Q)(I_m \otimes Q^*) = (I_mI_m)\otimes(QQ^*) = I_{mn^2};
\end{align*}
this implies that $M$ and $L$ have the same singular values.
\end{proof}

So now we have as a subproblem characterizing the singular values of a block matrix whose blocks are
diagonal.  
To express the the
characterization, 
it helps to reshape the nonzero elements of $L$ into a $m \times m \times n^2$ tensor $G$
as follows:
$
G_{c d w} = (D_{c d})_{w w}. 
$

\begin{theorem}\label{thm:L-svalue}
$
    \sigma(L) =
      \bigcup\limits_{w \in [n^2]} \sigma\left(G_{:,:,w} \right).
$
\end{theorem}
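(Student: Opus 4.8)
The plan is to exhibit an explicit permutation of the coordinates that block-diagonalizes $L$, turning it into a direct sum of the $m \times m$ matrices $G_{:,:,w}$; since a permutation preserves singular values, and the singular values of a block-diagonal matrix are the multiset union of those of its diagonal blocks, the claim follows immediately.

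First I would index the rows and columns of $L$ by pairs $(c,w) \in [m] \times [n^2]$, where $c$ selects the block row/column and $w$ the coordinate within a block. Because every $D_{cd}$ is diagonal (this is exactly the consequence of Section~\ref{s:one.filter} recorded just before the statement), we have
\[
L_{(c,w),(d,w')} = (D_{cd})_{w w'} = G_{cdw}\,[\,w = w'\,],
\]
so the entry vanishes unless $w = w'$, and when $w = w'$ it equals $G_{cdw}$.

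Next, let $P$ be the permutation matrix that reorders the index set $[m] \times [n^2]$ lexicographically by $w$ first and then $c$, so that it groups together all coordinates sharing a common $w$. Conjugating by this single permutation on both sides, $\tilde{L} \eqdef P L P^{\t}$, I would verify that $\tilde{L}$ is block diagonal with $n^2$ diagonal blocks, the $w$th being exactly the $m \times m$ matrix $(G_{cdw})_{c,d \in [m]} = G_{:,:,w}$. This is immediate from the display above: entries linking different values of $w$ are zero, and within a fixed $w$ the surviving entries are precisely the $G_{cdw}$.

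Finally, since $P$ is a permutation matrix it is unitary, so $\sigma(L) = \sigma(\tilde{L})$, and the singular values of a block-diagonal matrix are the multiset union of the singular values of its diagonal blocks, giving $\sigma(\tilde{L}) = \bigcup_{w \in [n^2]} \sigma(G_{:,:,w})$. The only real work is the bookkeeping in the middle step: confirming that the chosen permutation really does collect the diagonal entries of the $D_{cd}$ into the slices $G_{:,:,w}$. There is no analytic difficulty here, only reindexing together with the invariance of singular values under the (real orthogonal, hence unitary) transformation $P$.
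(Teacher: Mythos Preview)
Your proof is correct and takes a genuinely different route from the paper's. The paper proceeds by explicitly constructing singular vectors of $L$: for each $w$ and each singular triple $(\sigma,x,y)$ of $G_{:,:,w}$, it shows that $L(y \otimes e_w) = \sigma (x \otimes e_w)$, and then verifies by hand that the collection of vectors $\{x \otimes e_w\}$ and $\{y \otimes e_w\}$ obtained in this way is orthonormal, thereby assembling a full SVD of $L$. Your argument instead exhibits a single permutation $P$ that rearranges the index set from $(c,w)$-lexicographic to $(w,c)$-lexicographic order, so that $P L P^{\t}$ is block diagonal with blocks $G_{:,:,w}$, and then appeals to two standard facts (unitary invariance of singular values; singular values of a block-diagonal matrix are the multiset union of those of the blocks). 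Your route is shorter and avoids the case-by-case orthogonality checks; the paper's route has the advantage of producing the singular vectors of $L$ explicitly, which is relevant downstream when they project onto an operator-norm ball in Section~\ref{sec:regularization}. Of course, your permutation argument also yields the singular vectors implicitly via $P^{\t}$, so nothing is really lost.
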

\begin{proof}
Choose an arbitrary $w \in [n^2]$, and a (scalar) singular value $\sigma$ of $G_{:,:,w}$ whose
left singular vector is $x$ and whose right singular vector is $y$, so that $G_{:,:,w}y = \sigma x$.  Recall that $e_w \in \reals^{n^2}$ is a standard basis vector. 

We claim that
$L (y \otimes e_w) = \sigma (x \otimes e_w).$
Since $D_{cd}$ is diagonal, $D_{cd} e_w = (D_{cd})_{ww} e_w = G_{cdw} e_w$. Thus we have $(L(y\otimes e_w))_c = \sum_{d \in [m]} D_{cd} y_d e_w = (\sum_{d \in [m]} G_{cdw} y_d) e_w = (G_{:,:,w}y)_c e_w = \sigma x_c e_w$, which shows that
\begin{align*}
L (y \otimes e_w) 
& = \left[  \begin{array}{cccc}
 D_{00}  & \dotsc & D_{0(m-1)} \\
  D_{10}  &  \dotsc & D_{1(m-1)}  \\
\vdots & \dotsc & \vdots   \\
 D_{(m-1)0}  & \dotsc & D_{(m-1)(m-1)}
\end{array} \right]  \left[  \begin{array}{c}
         y_0 e_w   \\
         \vdots   \\
         y_{m-1} e_w   \\
     \end{array}\right] 
    =  \left[  \begin{array}{c}
         \sigma x_0 e_w   \\
         \vdots   \\
         \sigma x_{m-1} e_w
     \end{array}\right] = \sigma (x \otimes e_w).
\end{align*}
If $\tilde{\sigma}$ is another singular value of $G_{:,:,w}$ with a left singular vector $\tilde{x}$ and
a right singular vector $\tilde{y}$, then $\langle (x \otimes e_w) , (\tilde{x} \otimes e_w) \rangle = \langle x,  \tilde{x} \rangle = 0$
and, similarly  $\langle (y \otimes e_w) , (\tilde{y} \otimes e_w)\rangle = 0$.  Also, 
$\langle (x \otimes e_w) , (x \otimes e_w) \rangle = 1$ and 
$\langle (y \otimes e_w) , (y \otimes e_w) \rangle = 1$.

For any $x$ and $\tilde{x}$, whether they are equal or not, if $w \neq \tilde{w}$,
then $\langle (x \otimes e_w) , (\tilde{x} \otimes e_{\tilde{w}}) \rangle = 0$, simply because
their non-zero components do not overlap.  

Thus, by taking the Kronecker product of each singular vector of $G_{:,:,w}$ with $e_w$ and
assembling the results for various $w$,
we may form a singular value decomposition of $L$ whose singular values are $\cup_{w \in [n^2]} \sigma(G_{:,:,w})$.
This completes the proof.
\end{proof}

Using Lemmas~\ref{lemma:m-l-svalue} and Theorem~\ref{thm:L-svalue}, 
we are now ready to prove Theorem~\ref{thm:conv.sv}.

{\bf Proof (of Theorem~\ref{thm:conv.sv})}.
Recall that, for each input channel $c$ and output channel $d$, the diagonal elements of
$D_{c,d}$ are the eigenvalues of $B_{c,d}$.  By Theorem~\ref{thm:singular_values}, this
means that the diagonal elements of $D_{c,d}$ are
\begin{equation}
\label{e:diagonal.elements}
\{ (F^T K_{:,:,c,d} F)_{u, v} : u, v \in [n] \}.
\end{equation}
The elements of (\ref{e:diagonal.elements}) map to the diagonal elements of $D_{c,d}$ as follows:
$$
G_{c d w} = (D_{c,d})_{w w} = 
             (F^T K_{:,:,c,d} F)_{\floor{w/m}, w \mod m}
$$
and thus
$$
G_{:,:,w} = \left((F^T K_{:,:,c,d} F)_{\floor{w/m}, w \mod m}\right)_{c d}
$$
which in turn implies
$$
\bigcup_{w \in [n^2]} \sigma(G_{:,:,w})
  = \cup_{u \in [n], v \in [n]} \sigma \left(
               \left((F^T K_{:,:,c,d} F)_{u,v}\right)_{c d}
             \right).
$$
Applying Lemmas~\ref{lemma:m-l-svalue} and \ref{thm:L-svalue} completes the proof.
\qed

\section{Regularization}\label{sec:regularization}
We now show how to use the spectrum computed above to project a convolution onto the set of convolutions with bounded operator norm.  
We exploit the following key fact.
\begin{proposition}[\citet{lefkimmiatis2013hessian}, Proposition 1]
\label{p:projection}
Let $A \in \R^{n \times n}$, 
and let $A = UDV^\top$ be its singular value decomposition.  
Let $\tilde{A} = U\tilde{D}V^\top$, where,
for all $i \in [n]$,
$\tilde{D}_{ii} = \min(D_{ii}, c)$
and ${\cal B} = \{ X \ \mid\ ||X||_2 \leq c\}$.
Then $\tilde{A}$ is the projection of
$A$ onto $\cal B$; i.e. $\tilde{A} 
= \underset{X\in {\cal B}}{\argmin}{~||A - X||_F}$.  
\end{proposition}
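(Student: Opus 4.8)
The plan is to reduce this matrix projection to a separable scalar problem using von Neumann's trace inequality, after which coordinatewise clipping of the singular values solves it. First I would record that $\cal B = \{X : \|X\|_2 \le c\}$ is closed and convex (the operator norm is a norm, so its sublevel set is convex) and that $X \mapsto \|A - X\|_F^2$ is strictly convex; hence the minimizer $\argmin_{X \in \cal B}\|A - X\|_F$ exists and is unique, so it suffices to exhibit a single minimizer and argue it equals $\tilde A$.

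Next, writing $\sigma_1(A) \ge \cdots \ge \sigma_n(A) \ge 0$ for the singular values of $A$ (the diagonal entries of $D$) and similarly for $X$, I would expand
\begin{equation*}
\|A - X\|_F^2 = \sum_i \sigma_i(A)^2 - 2\,\tr(A^\top X) + \sum_i \sigma_i(X)^2
\end{equation*}
and apply von Neumann's trace inequality $\tr(A^\top X) \le \sum_i \sigma_i(A)\,\sigma_i(X)$ (see \citet{horn2013matrix}), with singular values taken in decreasing order on both sides. Since this bounds from above the subtracted middle term, it yields the lower bound $\|A - X\|_F^2 \ge \sum_i (\sigma_i(A) - \sigma_i(X))^2$, valid for every feasible and infeasible $X$ alike.

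Now the membership $X \in \cal B$ is precisely the requirement $\sigma_i(X) \le c$ for all $i$, so the right-hand side is minimized term by term by the choice $\sigma_i(X) = \min(\sigma_i(A), c)$; this clipped sequence is still nonincreasing, hence a legitimate singular-value spectrum, and it gives the universal bound $\sum_i (\sigma_i(A) - \min(\sigma_i(A), c))^2$. Finally I would check that $\tilde A = U \tilde D V^\top$ attains it: its singular values are $\tilde D_{ii} = \min(D_{ii}, c) \le c$, so $\tilde A \in \cal B$, and unitary invariance of the Frobenius norm gives $\|A - \tilde A\|_F^2 = \|D - \tilde D\|_F^2 = \sum_i (D_{ii} - \min(D_{ii}, c))^2$, matching the lower bound. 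By uniqueness of the projection, $\tilde A$ is therefore the minimizer.

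The one genuinely nontrivial ingredient, and the main obstacle, is von Neumann's trace inequality (equivalently, the Mirsky/Hoffman--Wielandt bound $\sum_i (\sigma_i(A) - \sigma_i(X))^2 \le \|A - X\|_F^2$). Its equality case, holding exactly when $A$ and $X$ are diagonalized by a common pair of singular-vector matrices, is precisely what forces the optimal $X$ to reuse the $U$ and $V$ of $A$ rather than rotating them; this is the step where care is needed, since one must confirm that matching singular vectors and clipping singular values are simultaneously achievable by $\tilde A$. The remaining pieces — convexity and uniqueness of the projection, and the elementary scalar minimization that produces clipping — are routine.
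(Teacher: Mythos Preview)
Your argument is correct. Note, however, that the paper does not supply its own proof of this proposition: it is quoted as Proposition~1 of \citet{lefkimmiatis2013hessian} and used as a black box. So there is nothing in the present paper to compare against. For what it is worth, the route you take---reduce to a spectral problem via the Mirsky/von Neumann bound $\|A-X\|_F^2 \ge \sum_i (\sigma_i(A)-\sigma_i(X))^2$, solve the resulting box-constrained scalar problem by clipping, and then exhibit $\tilde A = U\tilde D V^\top$ as a feasible point attaining the bound---is exactly the standard argument, and is essentially how the cited source proves it as well. Your observations that the clipped sequence remains nonincreasing (so it is a valid singular spectrum) and that uniqueness follows from strict convexity of $\|A-\cdot\|_F^2$ over the closed convex set $\cal B$ close the only small gaps one might worry about.
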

This implies that the desired projection can be obtained by clipping
the singular values of linear transformation associated with a
convolutional layer to the interval $[0,c]$.  Note that the eigenvectors remained
the same in the proposition, hence the projected matrix is still
generated by a convolution. However, after the projection, the
resulting convolution neighborhood may become as large as $n \times
n$.  On the other hand, we can project this convolution onto the set of
convolutions with $k \times k$ neighborhoods, by zeroing out all other
coefficients.
NumPy code for this is
in Appendix~\ref{a:proj_code}.  

Repeatedly alternating the two projections would
give a point in the intersection of the two sets, i.e., a
$k \times k$ convolution with bounded operator norm
(\citet{CheneyGoldstein1959} Theorem 4, \citet{boyd2003alternating} Section 2), and
the projection onto that intersection could be found using
the more complicated
Dykstra's projection algorithm~\citep{boyle1986method}.  

When we wish to control the operator norm during an
iterative optimization process, however, repeating the alternating
projections does not seem to be worth it -- we found
that the first two projections already often produced a convolutional
layer with an operator norm close to the desired value.  Furthermore,
because SGD does not change the parameters very fast, we can
think of a given pair of projections as providing a warm start
for the next pair.  

In practice, we run the two projections once every few steps, thus
letting the projection alternate with the training. 
\section{Experiments}

First, we validated 
Theorem~\ref{thm:conv.sv}
with unit tests in which the output of the code given in the introduction is compared with 
evaluating the singular values by constructing the full matrix 
encoding
the linear transformation corresponding to the convolutional layer and computing its SVD.

\subsection{Timing}
\label{s:timing}
We generated 4D tensors of various shapes with random standard normal
values, and computed their singular values using the full matrix method, the NumPy code given above and the equivalent TensorFlow code. For small tensors, the NumPy code was faster than TensorFlow, but for larger tensors, the TensorFlow code was able to exploit the parallelism in the algorithm and run much faster on a GPU. The timing results are shown in Figure~\ref{fig:timing}.

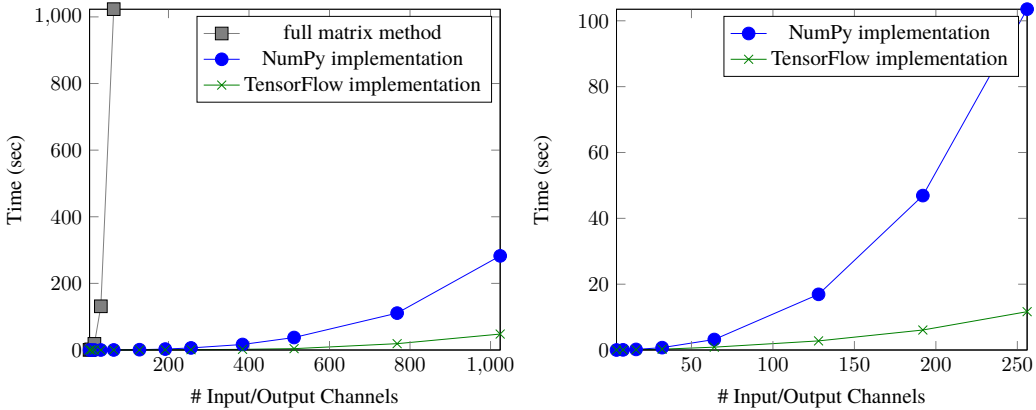
\begin{figure}[t]
\centering
\resizebox{1.0\textwidth}{!}{
%
%
%
%
\begin{tikzpicture}

\begin{axis}[
xlabel={\# Input/Output Channels},
ylabel={Time (sec)},
xmin=4, xmax=1024,
ymin=0.00339412689208984, ymax=1023.2387149334,
axis on top,
legend entries={{full matrix method},{NumPy implementation},{TensorFlow implementation}}
]
\addplot [lightgray!66.928104575163388!black, mark=square*, mark size=3, mark options={draw=black}]
coordinates {
(4,0.312762022018433)
(8,1.9580078125)
(16,18.4324049949646)
(32,131.246283054352)
(64,1023.2387149334)

};
\addplot [blue, mark=*, mark size=3, mark options={draw=blue}]
coordinates {
(4,0.00373387336730957)
(8,0.00339412689208984)
(16,0.0108850002288818)
(32,0.0450549125671387)
(64,0.191354036331177)
(128,0.976063966751099)
(192,2.79039406776428)
(256,6.26813077926636)
(384,16.6031079292297)
(512,37.5141170024872)
(768,110.76019692421)
(1024,282.571004867554)

};
\addplot [green!50.196078431372548!black, mark=x, mark size=3, mark options={draw=green!50.196078431372548!black}]
coordinates {
(4,0.0236310958862305)
(8,0.0079340934753418)
(16,0.0237419605255127)
(32,0.0203769207000732)
(64,0.10454797744751)
(128,0.278985977172852)
(192,0.503947019577026)
(256,0.909852981567383)
(384,2.0352098941803)
(512,4.07918190956116)
(768,19.0364699363708)
(1024,47.8946151733398)

};
\path [draw=black, fill opacity=0] (axis cs:4,1023.2387149334)--(axis cs:1024,1023.2387149334);

\path [draw=black, fill opacity=0] (axis cs:1024,0.00339412689208984)--(axis cs:1024,1023.2387149334);

\path [draw=black, fill opacity=0] (axis cs:4,0.00339412689208984)--(axis cs:4,1023.2387149334);

\path [draw=black, fill opacity=0] (axis cs:4,0.00339412689208984)--(axis cs:1024,0.00339412689208984);

\end{axis}

\end{tikzpicture}

%
%
%
%
\begin{tikzpicture}

\begin{axis}[
xlabel={\# Input/Output Channels},
ylabel={Time (sec)},
xmin=4, xmax=256,
ymin=0.012117862701416, ymax=103.537392139435,
axis on top,
legend entries={{NumPy implementation},{TensorFlow implementation}}
]
\addplot [blue, mark=*, mark size=3, mark options={draw=blue}]
coordinates {
(4,0.0168490409851074)
(8,0.0569248199462891)
(16,0.190770864486694)
(32,0.731719017028809)
(64,3.202388048172)
(128,16.9218420982361)
(192,46.9111888408661)
(256,103.537392139435)

};
\addplot [green!50.196078431372548!black, mark=x, mark size=3, mark options={draw=green!50.196078431372548!black}]
coordinates {
(4,0.012117862701416)
(8,0.0340299606323242)
(16,0.0897300243377686)
(32,0.261460065841675)
(64,0.877080917358398)
(128,2.78098917007446)
(192,6.08673691749573)
(256,11.6553330421448)

};
\path [draw=black, fill opacity=0] (axis cs:4,103.537392139435)--(axis cs:256,103.537392139435);

\path [draw=black, fill opacity=0] (axis cs:256,0.012117862701416)--(axis cs:256,103.537392139435);

\path [draw=black, fill opacity=0] (axis cs:4,0.012117862701416)--(axis cs:4,103.537392139435);

\path [draw=black, fill opacity=0] (axis cs:4,0.012117862701416)--(axis cs:256,0.012117862701416);

\end{axis}

\end{tikzpicture}
}
\caption{Time used to compute singular values. 
The left graph is for a $3 \times 3$ convolution on a $16 \times 16$ image with the number of input/output channels on the $x$-axis. The right graph is for a $11 \times 11$ convolution on a $64 \times 64$ image (no curve for full matrix method is shown as this method could not complete in a reasonable time for these inputs).}
\label{fig:timing}
\end{figure}

\subsection{Regularization}
\label{s:cifar}

We next explored the effect of regularizing the convolutional layers
by clipping their operator norms as described in Section~\ref{sec:regularization}. We ran the CIFAR-10 benchmark with a
standard 32 layer residual network with 2.4M training parameters;
\citep{he2016identity}. This network reached a test error rate of
$6.2\%$ after 250 epochs, using a learning rate schedule determined by
a grid search (shown by the gray plot in Figure~\ref{fig:cifar10}). We
then evaluated an algorithm that, every 100 steps, clipped the norms
of the convolutional layers to various different values between 0.1
and 3.0.  As expected, clipping to 2.5 and 3.0 had little impact on
the performance, since the norms of the convolutional layers were
between 2.5 and 2.8. Clipping to 0.1 yielded a surprising $6.7\%$ test
error, whereas clipping to 0.5 and 1.0 yielded test errors of $5.3\%$
and $5.5\%$ respectively (shown in Figure~\ref{fig:cifar10}).  A plot
of test error against training time is provided in Figure~\ref{fig:cifar10.error_vs_time} in Appendix~\ref{a:cifar10.error_vs_time}, 
showing that the projections did not slow down the training very much.

\begin{figure}[H]
\centering
\resizebox{1.0\textwidth}{!}{
\begin{tikzpicture}

\begin{axis}[
xlabel={\# Epochs},
ylabel={Loss},
xmin=10, xmax=250,
ymin=0, ymax=1.8,
axis on top,
xtick={0,50,100,150,200,250},
xticklabels={,50,100,150,200,250},
legend entries={{no clipping},{clip at 0.5},{clip at 1.0}}
]
\addplot [lightgray!66.928104575163388!black]
coordinates {
(10,1.73948461538461)
(11,1.67064358974359)
(12,1.59473846153846)
(13,1.54967435897436)
(14,1.48464102564103)
(15,1.43760512820513)
(16,1.41122307692308)
(17,1.34314615384615)
(18,1.31084615384615)
(19,1.27927948717949)
(20,1.21540769230769)
(21,1.19603333333333)
(22,1.17923076923077)
(23,1.10597435897436)
(24,1.10168205128205)
(25,1.0552282051282)
(26,1.02159487179487)
(27,0.998564102564102)
(28,0.968628205128205)
(29,0.924028205128205)
(30,0.928258974358974)
(31,0.896748717948718)
(32,0.867310256410256)
(33,0.845251282051282)
(34,0.813689743589743)
(35,0.809707692307692)
(36,0.771046153846154)
(37,0.750684615384615)
(38,0.751694871794871)
(39,0.712433333333333)
(40,0.712630769230769)
(41,0.706620512820513)
(42,0.687176923076923)
(43,0.679497435897436)
(44,0.65175641025641)
(45,0.633102564102564)
(46,0.633233333333333)
(47,0.613941025641026)
(48,0.598330769230769)
(49,0.6075)
(50,0.592251282051282)
(51,0.570661538461538)
(52,0.555307692307692)
(53,0.546025641025641)
(54,0.538902564102564)
(55,0.53095641025641)
(56,0.519197435897436)
(57,0.506130769230769)
(58,0.504823076923077)
(59,0.492182051282051)
(60,0.478241025641026)
(61,0.482294871794872)
(62,0.482571794871795)
(63,0.473941025641026)
(64,0.468289743589743)
(65,0.44615641025641)
(66,0.449348717948718)
(67,0.439607692307692)
(68,0.42904358974359)
(69,0.427923076923077)
(70,0.432005128205128)
(71,0.4147)
(72,0.397982051282051)
(73,0.414538461538462)
(74,0.395835897435897)
(75,0.389248717948718)
(76,0.402102564102564)
(77,0.389330769230769)
(78,0.380338461538461)
(79,0.382087179487179)
(80,0.364951282051282)
(81,0.37595641025641)
(82,0.367158974358974)
(83,0.373305128205128)
(84,0.376348717948718)
(85,0.367658974358974)
(86,0.359694871794872)
(87,0.351692307692308)
(88,0.358407692307692)
(89,0.344471794871795)
(90,0.346228205128205)
(91,0.341371794871795)
(92,0.334384615384615)
(93,0.34295641025641)
(94,0.341687179487179)
(95,0.322748717948718)
(96,0.324092307692308)
(97,0.344487179487179)
(98,0.332994871794872)
(99,0.314353846153846)
(100,0.322907692307692)
(101,0.320802564102564)
(102,0.322128205128205)
(103,0.32104358974359)
(104,0.327494871794872)
(105,0.3133)
(106,0.31491282051282)
(107,0.312515384615385)
(108,0.303605128205128)
(109,0.303612820512821)
(110,0.2982)
(111,0.291807692307692)
(112,0.316625641025641)
(113,0.300079487179487)
(114,0.292253846153846)
(115,0.299917948717949)
(116,0.298817948717949)
(117,0.297802564102564)
(118,0.286676923076923)
(119,0.296812820512821)
(120,0.255451282051282)
(121,0.222548717948718)
(122,0.214074358974359)
(123,0.210592307692308)
(124,0.206125641025641)
(125,0.20155641025641)
(126,0.200882051282051)
(127,0.197341025641026)
(128,0.192894871794872)
(129,0.194241025641026)
(130,0.190717948717949)
(131,0.187694871794872)
(132,0.186869230769231)
(133,0.184458974358974)
(134,0.180023076923077)
(135,0.179725641025641)
(136,0.177846153846154)
(137,0.175430769230769)
(138,0.172789743589744)
(139,0.172620512820513)
(140,0.171187179487179)
(141,0.169902564102564)
(142,0.167374358974359)
(143,0.168669230769231)
(144,0.164807692307692)
(145,0.163069230769231)
(146,0.162802564102564)
(147,0.15864358974359)
(148,0.157846153846154)
(149,0.155507692307692)
(150,0.156923076923077)
(151,0.15601282051282)
(152,0.152946153846154)
(153,0.152253846153846)
(154,0.15161282051282)
(155,0.15094358974359)
(156,0.147479487179487)
(157,0.147174358974359)
(158,0.146776923076923)
(159,0.144535897435897)
(160,0.142889743589744)
(161,0.143010256410256)
(162,0.144179487179487)
(163,0.142976923076923)
(164,0.142612820512821)
(165,0.142966666666667)
(166,0.143189743589744)
(167,0.14315641025641)
(168,0.141261538461538)
(169,0.142507692307692)
(170,0.142828205128205)
(171,0.141661538461538)
(172,0.140671794871795)
(173,0.140558974358974)
(174,0.141305128205128)
(175,0.142887179487179)
(176,0.141135897435897)
(177,0.140748717948718)
(178,0.140448717948718)
(179,0.139920512820513)
(180,0.140664102564103)
(181,0.140279487179487)
(182,0.1405)
(183,0.140025641025641)
(184,0.139589743589744)
(185,0.139153846153846)
(186,0.13951282051282)
(187,0.139523076923077)
(188,0.139517948717949)
(189,0.138828205128205)
(190,0.139510256410256)
(191,0.139366666666667)
(192,0.139174358974359)
(193,0.139210256410256)
(194,0.139451282051282)
(195,0.139930769230769)
(196,0.139320512820513)
(197,0.139048717948718)
(198,0.139971794871795)
(199,0.139123076923077)
(200,0.137928205128205)
(201,0.138558974358974)
(202,0.137497435897436)
(203,0.137989743589744)
(204,0.138830769230769)
(205,0.137707692307692)
(206,0.13714358974359)
(207,0.138502564102564)
(208,0.137474358974359)
(209,0.13705641025641)
(210,0.137992307692308)
(211,0.136833333333333)
(212,0.136702564102564)
(213,0.136274358974359)
(214,0.136892307692308)
(215,0.136251282051282)
(216,0.137287179487179)
(217,0.137215384615385)
(218,0.136464102564103)
(219,0.137448717948718)
(220,0.136407692307692)
(221,0.136520512820513)
(222,0.136458974358974)
(223,0.136294871794872)
(224,0.137846153846154)
(225,0.136784615384615)
(226,0.136823076923077)
(227,0.136294871794872)
(228,0.137069230769231)
(229,0.1375)
(230,0.136528205128205)
(231,0.136764102564103)
(232,0.13641282051282)
(233,0.13585641025641)
(234,0.1362)
(235,0.136582051282051)
(236,0.135912820512821)
(237,0.136494871794872)
(238,0.136179487179487)
(239,0.136876923076923)
(240,0.136410256410256)
(241,0.136764102564103)
(242,0.136002564102564)
(243,0.136676923076923)
(244,0.136764102564103)
(245,0.135892307692308)
(246,0.13541282051282)
(247,0.135902564102564)
(248,0.136194871794872)
(249,0.135887179487179)
(250,0.135616)

};
\addplot [color = green!50.0!black, style=dashed]
coordinates {
(10,0.637)
(11,0.659269230769231)
(12,0.654071794871795)
(13,0.592835897435897)
(14,0.621748717948718)
(15,0.591610256410256)
(16,0.567210256410256)
(17,0.582594871794872)
(18,0.576015384615384)
(19,0.57574358974359)
(20,0.567223076923077)
(21,0.508420512820513)
(22,0.532133333333333)
(23,0.49781282051282)
(24,0.531464102564102)
(25,0.529546153846154)
(26,0.509292307692307)
(27,0.530969230769231)
(28,0.521435897435897)
(29,0.516066666666667)
(30,0.509215384615384)
(31,0.506858974358974)
(32,0.487107692307692)
(33,0.497782051282051)
(34,0.476905128205128)
(35,0.478174358974359)
(36,0.463220512820513)
(37,0.458782051282051)
(38,0.45685641025641)
(39,0.474997435897436)
(40,0.440038461538461)
(41,0.463135897435897)
(42,0.473397435897436)
(43,0.4724)
(44,0.464848717948718)
(45,0.456569230769231)
(46,0.444584615384615)
(47,0.445217948717948)
(48,0.447102564102564)
(49,0.441069230769231)
(50,0.461876923076923)
(51,0.433271794871795)
(52,0.441841025641026)
(53,0.439082051282051)
(54,0.426664102564102)
(55,0.457169230769231)
(56,0.473153846153846)
(57,0.427082051282051)
(58,0.434151282051282)
(59,0.4439)
(60,0.434525641025641)
(61,0.434374358974359)
(62,0.42355641025641)
(63,0.420097435897436)
(64,0.426648717948718)
(65,0.449610256410256)
(66,0.418920512820513)
(67,0.43444358974359)
(68,0.409751282051282)
(69,0.428610256410256)
(70,0.425048717948718)
(71,0.421941025641026)
(72,0.434817948717949)
(73,0.422633333333333)
(74,0.422169230769231)
(75,0.421105128205128)
(76,0.435741025641026)
(77,0.404735897435897)
(78,0.401717948717949)
(79,0.404487179487179)
(80,0.415461538461538)
(81,0.417571794871795)
(82,0.407171794871795)
(83,0.40954358974359)
(84,0.437464102564102)
(85,0.417802564102564)
(86,0.41594358974359)
(87,0.418720512820513)
(88,0.422333333333333)
(89,0.407025641025641)
(90,0.410584615384615)
(91,0.4309)
(92,0.402415384615384)
(93,0.389497435897436)
(94,0.4032)
(95,0.401205128205128)
(96,0.401438461538461)
(97,0.413653846153846)
(98,0.404989743589743)
(99,0.401151282051282)
(100,0.392633333333333)
(101,0.423694871794872)
(102,0.414969230769231)
(103,0.403048717948718)
(104,0.39944358974359)
(105,0.420623076923077)
(106,0.425507692307692)
(107,0.415535897435897)
(108,0.401376923076923)
(109,0.414797435897436)
(110,0.40781282051282)
(111,0.411838461538461)
(112,0.407569230769231)
(113,0.400294871794872)
(114,0.402420512820513)
(115,0.414682051282051)
(116,0.385882051282051)
(117,0.381507692307692)
(118,0.400294871794872)
(119,0.399384615384615)
(120,0.304471794871795)
(121,0.214753846153846)
(122,0.205279487179487)
(123,0.188420512820513)
(124,0.189823076923077)
(125,0.186671794871795)
(126,0.173025641025641)
(127,0.174915384615385)
(128,0.165741025641026)
(129,0.172497435897436)
(130,0.171223076923077)
(131,0.163258974358974)
(132,0.170115384615385)
(133,0.168087179487179)
(134,0.158017948717949)
(135,0.16605641025641)
(136,0.159789743589744)
(137,0.162415384615385)
(138,0.158620512820513)
(139,0.154120512820513)
(140,0.144164102564103)
(141,0.155605128205128)
(142,0.147207692307692)
(143,0.157974358974359)
(144,0.152120512820513)
(145,0.145630769230769)
(146,0.149851282051282)
(147,0.149248717948718)
(148,0.149710256410256)
(149,0.148987179487179)
(150,0.143471794871795)
(151,0.146761538461538)
(152,0.143771794871795)
(153,0.151312820512821)
(154,0.154841025641026)
(155,0.149341025641026)
(156,0.149397435897436)
(157,0.114723076923077)
(158,0.1066)
(159,0.10294358974359)
(160,0.100061538461538)
(161,0.100697435897436)
(162,0.0969282051282051)
(163,0.0963230769230769)
(164,0.0986589743589744)
(165,0.0916589743589743)
(166,0.0928076923076923)
(167,0.0926692307692308)
(168,0.0930948717948718)
(169,0.0927538461538461)
(170,0.0922102564102564)
(171,0.0894846153846154)
(172,0.0908717948717948)
(173,0.088851282051282)
(174,0.0904538461538461)
(175,0.0896358974358974)
(176,0.0884025641025641)
(177,0.0871794871794871)
(178,0.0876153846153846)
(179,0.0871384615384615)
(180,0.0874282051282051)
(181,0.0854025641025641)
(182,0.0873897435897436)
(183,0.0861897435897436)
(184,0.0862025641025641)
(185,0.0866769230769231)
(186,0.0867846153846153)
(187,0.0857435897435898)
(188,0.0857384615384615)
(189,0.0863923076923077)
(190,0.0841179487179487)
(191,0.0851717948717948)
(192,0.0849076923076923)
(193,0.0836923076923077)
(194,0.0838692307692308)
(195,0.0832128205128205)
(196,0.0839564102564102)
(197,0.0842564102564102)
(198,0.0824102564102564)
(199,0.0834205128205128)
(200,0.082774358974359)
(201,0.0835333333333333)
(202,0.0823384615384615)
(203,0.0824820512820513)
(204,0.0818794871794872)
(205,0.0827076923076923)
(206,0.0825589743589743)
(207,0.0818102564102564)
(208,0.0814871794871795)
(209,0.0804743589743589)
(210,0.0802435897435897)
(211,0.0820307692307692)
(212,0.0821358974358974)
(213,0.081525641025641)
(214,0.0812358974358974)
(215,0.0805153846153846)
(216,0.0814641025641025)
(217,0.081625641025641)
(218,0.0807538461538461)
(219,0.0802384615384615)
(220,0.0809948717948718)
(221,0.0822358974358974)
(222,0.0822076923076923)
(223,0.0798384615384615)
(224,0.0814923076923077)
(225,0.0808333333333333)
(226,0.0812717948717948)
(227,0.0809)
(228,0.0807076923076923)
(229,0.0803923076923077)
(230,0.0803666666666667)
(231,0.0810820512820513)
(232,0.0811794871794872)
(233,0.0820435897435897)
(234,0.0812564102564102)
(235,0.0806051282051282)
(236,0.0810666666666666)
(237,0.0815794871794872)
(238,0.0808871794871795)
(239,0.0822)
(240,0.0819717948717949)
(241,0.0801)
(242,0.0809153846153846)
(243,0.0814410256410256)
(244,0.0809384615384615)
(245,0.0808179487179487)
(246,0.080974358974359)
(247,0.0804051282051282)
(248,0.0809948717948718)
(249,0.0813461538461538)
(250,0.081596)

};
\addplot [blue, thick, style=dotted]
coordinates {
(10,0.805853846153846)
(11,0.759443589743589)
(12,0.747648717948718)
(13,0.697041025641026)
(14,0.702176923076923)
(15,0.669294871794872)
(16,0.655364102564102)
(17,0.640638461538461)
(18,0.606692307692308)
(19,0.586805128205128)
(20,0.596551282051282)
(21,0.557374358974359)
(22,0.557194871794872)
(23,0.548333333333333)
(24,0.534002564102564)
(25,0.52434358974359)
(26,0.525779487179487)
(27,0.505897435897436)
(28,0.494784615384615)
(29,0.506489743589744)
(30,0.489961538461538)
(31,0.480238461538461)
(32,0.460079487179487)
(33,0.470525641025641)
(34,0.470305128205128)
(35,0.437864102564102)
(36,0.447638461538461)
(37,0.43104358974359)
(38,0.42771282051282)
(39,0.419861538461538)
(40,0.414315384615384)
(41,0.402025641025641)
(42,0.406397435897436)
(43,0.387823076923077)
(44,0.386951282051282)
(45,0.391046153846154)
(46,0.390971794871795)
(47,0.367966666666667)
(48,0.375071794871795)
(49,0.372841025641026)
(50,0.366833333333333)
(51,0.363689743589744)
(52,0.353341025641026)
(53,0.3627)
(54,0.358525641025641)
(55,0.351969230769231)
(56,0.358597435897436)
(57,0.377753846153846)
(58,0.351464102564102)
(59,0.34971282051282)
(60,0.346484615384615)
(61,0.36624358974359)
(62,0.339676923076923)
(63,0.330084615384615)
(64,0.336282051282051)
(65,0.321551282051282)
(66,0.332584615384615)
(67,0.328602564102564)
(68,0.315569230769231)
(69,0.33554358974359)
(70,0.32574358974359)
(71,0.306410256410256)
(72,0.337964102564103)
(73,0.323087179487179)
(74,0.319153846153846)
(75,0.325066666666667)
(76,0.324430769230769)
(77,0.298094871794872)
(78,0.31435641025641)
(79,0.306489743589743)
(80,0.298305128205128)
(81,0.316723076923077)
(82,0.297)
(83,0.293061538461538)
(84,0.293953846153846)
(85,0.298492307692308)
(86,0.304017948717949)
(87,0.299951282051282)
(88,0.292910256410256)
(89,0.290453846153846)
(90,0.281284615384615)
(91,0.305948717948718)
(92,0.2919)
(93,0.290887179487179)
(94,0.288479487179487)
(95,0.29415641025641)
(96,0.290423076923077)
(97,0.302405128205128)
(98,0.301335897435897)
(99,0.286594871794872)
(100,0.285110256410256)
(101,0.282261538461538)
(102,0.28201282051282)
(103,0.291228205128205)
(104,0.290958974358974)
(105,0.276138461538461)
(106,0.281730769230769)
(107,0.285823076923077)
(108,0.29314358974359)
(109,0.266289743589744)
(110,0.288497435897436)
(111,0.284287179487179)
(112,0.281782051282051)
(113,0.293889743589743)
(114,0.268820512820513)
(115,0.287410256410256)
(116,0.28521282051282)
(117,0.29161282051282)
(118,0.268153846153846)
(119,0.287830769230769)
(120,0.223794871794872)
(121,0.179471794871795)
(122,0.170738461538461)
(123,0.167289743589744)
(124,0.159253846153846)
(125,0.158346153846154)
(126,0.153284615384615)
(127,0.154212820512821)
(128,0.149551282051282)
(129,0.147176923076923)
(130,0.146751282051282)
(131,0.145346153846154)
(132,0.142348717948718)
(133,0.139466666666667)
(134,0.13964358974359)
(135,0.136584615384615)
(136,0.135094871794872)
(137,0.135176923076923)
(138,0.134166666666667)
(139,0.134117948717949)
(140,0.131189743589744)
(141,0.130376923076923)
(142,0.13124358974359)
(143,0.126846153846154)
(144,0.127902564102564)
(145,0.128074358974359)
(146,0.128315384615385)
(147,0.124115384615385)
(148,0.124038461538462)
(149,0.124948717948718)
(150,0.123284615384615)
(151,0.123007692307692)
(152,0.122433333333333)
(153,0.119587179487179)
(154,0.120482051282051)
(155,0.11984358974359)
(156,0.119835897435897)
(157,0.116117948717949)
(158,0.113117948717949)
(159,0.112410256410256)
(160,0.111166666666667)
(161,0.110617948717949)
(162,0.111923076923077)
(163,0.109602564102564)
(164,0.110535897435897)
(165,0.109628205128205)
(166,0.11125641025641)
(167,0.108692307692308)
(168,0.108897435897436)
(169,0.108317948717949)
(170,0.108497435897436)
(171,0.109776923076923)
(172,0.108448717948718)
(173,0.108210256410256)
(174,0.108833333333333)
(175,0.108612820512821)
(176,0.107807692307692)
(177,0.106884615384615)
(178,0.107546153846154)
(179,0.106994871794872)
(180,0.107097435897436)
(181,0.107392307692308)
(182,0.107841025641026)
(183,0.107015384615385)
(184,0.106302564102564)
(185,0.106753846153846)
(186,0.106489743589744)
(187,0.106864102564103)
(188,0.10671282051282)
(189,0.105982051282051)
(190,0.105597435897436)
(191,0.105784615384615)
(192,0.105505128205128)
(193,0.106771794871795)
(194,0.105505128205128)
(195,0.105638461538462)
(196,0.105523076923077)
(197,0.105676923076923)
(198,0.105810256410256)
(199,0.105902564102564)
(200,0.105061538461538)
(201,0.104935897435897)
(202,0.10534358974359)
(203,0.104761538461538)
(204,0.104930769230769)
(205,0.104969230769231)
(206,0.104425641025641)
(207,0.104823076923077)
(208,0.104879487179487)
(209,0.104397435897436)
(210,0.104625641025641)
(211,0.104287179487179)
(212,0.104541025641026)
(213,0.10435641025641)
(214,0.10431282051282)
(215,0.104212820512821)
(216,0.104653846153846)
(217,0.104)
(218,0.104620512820513)
(219,0.104289743589744)
(220,0.104535897435897)
(221,0.103679487179487)
(222,0.104015384615385)
(223,0.104453846153846)
(224,0.103825641025641)
(225,0.104292307692308)
(226,0.104351282051282)
(227,0.103825641025641)
(228,0.103535897435897)
(229,0.104215384615385)
(230,0.103894871794872)
(231,0.103384615384615)
(232,0.104053846153846)
(233,0.103515384615385)
(234,0.103566666666667)
(235,0.104020512820513)
(236,0.103835897435897)
(237,0.103812820512821)
(238,0.103676923076923)
(239,0.104074358974359)
(240,0.103584615384615)
(241,0.103364102564103)
(242,0.103594871794872)
(243,0.103502564102564)
(244,0.105576923076923)
(245,0.103907692307692)
(246,0.103594871794872)
(247,0.104238461538462)
(248,0.103853846153846)
(249,0.103528205128205)
(250,0.104224)

};

\path [draw=black, fill opacity=0] (axis cs:10,1.8)--(axis cs:250,1.8);

\path [draw=black, fill opacity=0] (axis cs:250,0)--(axis cs:250,1.8);

\path [draw=black, fill opacity=0] (axis cs:10,0)--(axis cs:10,1.8);

\path [draw=black, fill opacity=0] (axis cs:10,0)--(axis cs:250,0);

\end{axis}

\end{tikzpicture}

%
%
%
%
\begin{tikzpicture}

\begin{axis}[
xlabel={\# Epochs},
ylabel={Test Error},
xmin=10, xmax=249,
ymin=0, ymax=0.4,
axis on top,
legend entries={{no clipping},{clip at 0.5},{clip at 1.0}}
]
\addplot [white!50.196078431372548!black]
coordinates {
(10,0.292583333333333)
(11,0.290866666666667)
(12,0.293833333333333)
(13,0.275116666666667)
(14,0.2593)
(15,0.2586)
(16,0.25405)
(17,0.239666666666667)
(18,0.235233333333333)
(19,0.212183333333333)
(20,0.222316666666667)
(21,0.2154)
(22,0.206066666666667)
(23,0.206233333333333)
(24,0.192366666666667)
(25,0.192183333333333)
(26,0.177733333333333)
(27,0.179216666666667)
(28,0.177466666666667)
(29,0.1699)
(30,0.172833333333333)
(31,0.168583333333333)
(32,0.173016666666667)
(33,0.178266666666667)
(34,0.184016666666667)
(35,0.194233333333333)
(36,0.189366666666667)
(37,0.194716666666667)
(38,0.18715)
(39,0.174616666666667)
(40,0.1809)
(41,0.172166666666667)
(42,0.1696)
(43,0.17005)
(44,0.170466666666667)
(45,0.169033333333333)
(46,0.161316666666667)
(47,0.164533333333333)
(48,0.16825)
(49,0.16425)
(50,0.161)
(51,0.169183333333333)
(52,0.169933333333333)
(53,0.172266666666667)
(54,0.17785)
(55,0.177083333333333)
(56,0.1781)
(57,0.166166666666667)
(58,0.160416666666667)
(59,0.148533333333333)
(60,0.1379)
(61,0.1348)
(62,0.133183333333333)
(63,0.142483333333333)
(64,0.1474)
(65,0.148866666666667)
(66,0.16315)
(67,0.164416666666667)
(68,0.163683333333333)
(69,0.160916666666667)
(70,0.153116666666667)
(71,0.1567)
(72,0.149433333333333)
(73,0.146483333333333)
(74,0.148533333333333)
(75,0.150783333333333)
(76,0.1522)
(77,0.162066666666667)
(78,0.154466666666667)
(79,0.155616666666667)
(80,0.1547)
(81,0.15035)
(82,0.151)
(83,0.140583333333333)
(84,0.149533333333333)
(85,0.147366666666667)
(86,0.144833333333333)
(87,0.144466666666667)
(88,0.141233333333333)
(89,0.141716666666667)
(90,0.13485)
(91,0.13665)
(92,0.133866666666667)
(93,0.1403)
(94,0.138583333333333)
(95,0.137016666666667)
(96,0.132033333333333)
(97,0.129783333333333)
(98,0.1337)
(99,0.128183333333333)
(100,0.139933333333333)
(101,0.138633333333333)
(102,0.13915)
(103,0.13875)
(104,0.135783333333333)
(105,0.13415)
(106,0.1262)
(107,0.129433333333333)
(108,0.1272)
(109,0.130816666666667)
(110,0.131816666666667)
(111,0.132166666666667)
(112,0.132533333333333)
(113,0.127233333333333)
(114,0.132566666666667)
(115,0.1307)
(116,0.134916666666667)
(117,0.136866666666667)
(118,0.133583333333333)
(119,0.133033333333333)
(120,0.122466666666667)
(121,0.112566666666667)
(122,0.100833333333333)
(123,0.0862666666666667)
(124,0.0791666666666666)
(125,0.07205)
(126,0.0708166666666666)
(127,0.07025)
(128,0.0700333333333333)
(129,0.0704666666666666)
(130,0.0693333333333333)
(131,0.06935)
(132,0.0693333333333333)
(133,0.0693333333333333)
(134,0.0692166666666666)
(135,0.0686833333333333)
(136,0.06865)
(137,0.06875)
(138,0.06885)
(139,0.06955)
(140,0.0695)
(141,0.07035)
(142,0.07075)
(143,0.0706833333333333)
(144,0.0707166666666666)
(145,0.0711666666666667)
(146,0.0712833333333333)
(147,0.0705333333333333)
(148,0.07085)
(149,0.071)
(150,0.0716)
(151,0.0714666666666667)
(152,0.0717333333333333)
(153,0.0717833333333333)
(154,0.07205)
(155,0.0721666666666667)
(156,0.0716)
(157,0.0705333333333333)
(158,0.0696666666666667)
(159,0.0689666666666667)
(160,0.06785)
(161,0.06665)
(162,0.06595)
(163,0.0655)
(164,0.0650666666666667)
(165,0.06475)
(166,0.0642166666666666)
(167,0.0638)
(168,0.0635)
(169,0.06335)
(170,0.0633333333333333)
(171,0.0632833333333333)
(172,0.0633833333333333)
(173,0.0635166666666667)
(174,0.06355)
(175,0.0633833333333333)
(176,0.0632166666666666)
(177,0.0631166666666667)
(178,0.0630333333333333)
(179,0.0629833333333333)
(180,0.06275)
(181,0.0626833333333333)
(182,0.0625833333333334)
(183,0.0625333333333334)
(184,0.0624666666666667)
(185,0.0623833333333333)
(186,0.0624166666666667)
(187,0.06245)
(188,0.0623666666666667)
(189,0.0622666666666667)
(190,0.0620833333333333)
(191,0.0620666666666667)
(192,0.0620333333333333)
(193,0.0619333333333333)
(194,0.0618833333333333)
(195,0.0618)
(196,0.06175)
(197,0.0615333333333333)
(198,0.0615)
(199,0.0615833333333333)
(200,0.0616333333333333)
(201,0.06165)
(202,0.0617)
(203,0.0618333333333333)
(204,0.06175)
(205,0.0616833333333333)
(206,0.0617166666666667)
(207,0.0618333333333333)
(208,0.0619)
(209,0.0617833333333333)
(210,0.06185)
(211,0.0617666666666667)
(212,0.0616333333333333)
(213,0.0614666666666667)
(214,0.0613333333333333)
(215,0.0612666666666667)
(216,0.0612)
(217,0.0611833333333334)
(218,0.0612166666666667)
(219,0.0612166666666667)
(220,0.0612166666666667)
(221,0.0612666666666667)
(222,0.0612333333333334)
(223,0.0612166666666667)
(224,0.0611833333333334)
(225,0.06125)
(226,0.0613166666666667)
(227,0.06135)
(228,0.0613333333333333)
(229,0.0613833333333333)
(230,0.0613833333333333)
(231,0.0614)
(232,0.0613833333333333)
(233,0.0613833333333333)
(234,0.0614666666666667)
(235,0.0614666666666667)
(236,0.0615)
(237,0.0615)
(238,0.0614833333333333)
(239,0.0614833333333333)
(240,0.0614666666666667)
(241,0.0615)
(242,0.0615)
(243,0.0615166666666667)
(244,0.0615833333333333)
(245,0.0616166666666666)
(246,0.06165)
(247,0.0616333333333333)
(248,0.0616833333333333)
(249,0.0617)

};
\addplot [green!50.0!black,style=dashed]
coordinates {
(10,0.743583333333333)
(11,0.653466666666667)
(12,0.580133333333333)
(13,0.517816666666667)
(14,0.446416666666667)
(15,0.366616666666667)
(16,0.303433333333333)
(17,0.309016666666667)
(18,0.304483333333333)
(19,0.299666666666667)
(20,0.3232)
(21,0.36205)
(22,0.358666666666667)
(23,0.349833333333333)
(24,0.339833333333333)
(25,0.344266666666667)
(26,0.3157)
(27,0.2823)
(28,0.285416666666667)
(29,0.284766666666667)
(30,0.27955)
(31,0.268216666666667)
(32,0.30215)
(33,0.324083333333333)
(34,0.329716666666667)
(35,0.330816666666667)
(36,0.322033333333333)
(37,0.314983333333333)
(38,0.280216666666667)
(39,0.253966666666667)
(40,0.249083333333333)
(41,0.243733333333333)
(42,0.272016666666667)
(43,0.284816666666667)
(44,0.27995)
(45,0.310666666666667)
(46,0.308016666666667)
(47,0.3054)
(48,0.285033333333333)
(49,0.277116666666667)
(50,0.280516666666667)
(51,0.254033333333333)
(52,0.241016666666667)
(53,0.2505)
(54,0.25705)
(55,0.250533333333333)
(56,0.250716666666667)
(57,0.260016666666667)
(58,0.267983333333333)
(59,0.258933333333333)
(60,0.25365)
(61,0.2537)
(62,0.252916666666667)
(63,0.238966666666667)
(64,0.246816666666667)
(65,0.268683333333333)
(66,0.2775)
(67,0.2802)
(68,0.27515)
(69,0.273083333333333)
(70,0.289633333333333)
(71,0.271266666666667)
(72,0.270016666666667)
(73,0.271583333333333)
(74,0.275)
(75,0.283416666666667)
(76,0.253366666666667)
(77,0.242766666666667)
(78,0.253383333333333)
(79,0.267066666666667)
(80,0.262833333333333)
(81,0.258016666666667)
(82,0.252533333333333)
(83,0.255483333333333)
(84,0.243083333333333)
(85,0.224933333333333)
(86,0.247883333333333)
(87,0.2428)
(88,0.260933333333333)
(89,0.26125)
(90,0.245766666666667)
(91,0.2445)
(92,0.23415)
(93,0.260166666666667)
(94,0.25045)
(95,0.2486)
(96,0.261)
(97,0.273716666666667)
(98,0.269833333333333)
(99,0.244516666666667)
(100,0.23655)
(101,0.241416666666667)
(102,0.235566666666667)
(103,0.218483333333333)
(104,0.210033333333333)
(105,0.22535)
(106,0.2405)
(107,0.240933333333333)
(108,0.250133333333333)
(109,0.253516666666667)
(110,0.254633333333333)
(111,0.258066666666667)
(112,0.247816666666667)
(113,0.254766666666667)
(114,0.2391)
(115,0.237483333333333)
(116,0.228183333333333)
(117,0.23015)
(118,0.231166666666667)
(119,0.21595)
(120,0.208216666666667)
(121,0.193816666666667)
(122,0.187066666666667)
(123,0.146816666666667)
(124,0.1263)
(125,0.1144)
(126,0.104816666666667)
(127,0.0970166666666666)
(128,0.0939666666666666)
(129,0.0954666666666666)
(130,0.100083333333333)
(131,0.0996)
(132,0.0997333333333333)
(133,0.1012)
(134,0.0976333333333333)
(135,0.0978333333333333)
(136,0.0940333333333333)
(137,0.0922833333333333)
(138,0.0921333333333333)
(139,0.0913333333333333)
(140,0.09395)
(141,0.0963)
(142,0.09525)
(143,0.09795)
(144,0.10095)
(145,0.102233333333333)
(146,0.103833333333333)
(147,0.102583333333333)
(148,0.110133333333333)
(149,0.107933333333333)
(150,0.10585)
(151,0.1065)
(152,0.1063)
(153,0.10565)
(154,0.09935)
(155,0.0993)
(156,0.0940666666666667)
(157,0.08795)
(158,0.0799833333333333)
(159,0.0721)
(160,0.06555)
(161,0.0599166666666666)
(162,0.0574)
(163,0.0561666666666667)
(164,0.0553666666666666)
(165,0.0549833333333333)
(166,0.0547)
(167,0.0545)
(168,0.0543833333333333)
(169,0.05395)
(170,0.05385)
(171,0.0538166666666667)
(172,0.0538666666666667)
(173,0.0540666666666667)
(174,0.0540333333333333)
(175,0.05395)
(176,0.0537833333333333)
(177,0.0535666666666667)
(178,0.0535166666666667)
(179,0.0532666666666667)
(180,0.0531666666666667)
(181,0.0531833333333334)
(182,0.0532333333333333)
(183,0.0533333333333333)
(184,0.0533666666666667)
(185,0.0533)
(186,0.05325)
(187,0.05305)
(188,0.0528166666666667)
(189,0.05265)
(190,0.05255)
(191,0.0527666666666667)
(192,0.0527)
(193,0.0529166666666667)
(194,0.053)
(195,0.0530833333333333)
(196,0.0531166666666667)
(197,0.05305)
(198,0.0533833333333333)
(199,0.0534166666666667)
(200,0.05345)
(201,0.0534666666666667)
(202,0.0534333333333333)
(203,0.0533333333333333)
(204,0.05305)
(205,0.0531166666666666)
(206,0.0533)
(207,0.0532)
(208,0.0531166666666666)
(209,0.0530833333333333)
(210,0.0532333333333333)
(211,0.0529166666666667)
(212,0.05255)
(213,0.0526166666666667)
(214,0.0528)
(215,0.05295)
(216,0.0528833333333333)
(217,0.0530833333333333)
(218,0.05325)
(219,0.0533)
(220,0.0532333333333333)
(221,0.0531333333333333)
(222,0.0531166666666667)
(223,0.0530833333333333)
(224,0.0530666666666667)
(225,0.0530666666666667)
(226,0.05305)
(227,0.0530833333333333)
(228,0.05305)
(229,0.0530166666666667)
(230,0.0531)
(231,0.0530166666666667)
(232,0.053)
(233,0.05295)
(234,0.0529166666666666)
(235,0.0530333333333333)
(236,0.053)
(237,0.053)
(238,0.05295)
(239,0.0529666666666667)
(240,0.0530333333333333)
(241,0.0529833333333333)
(242,0.0529833333333333)
(243,0.0531333333333333)
(244,0.0532833333333333)
(245,0.0533666666666667)
(246,0.0534)
(247,0.0534)
(248,0.0534166666666667)
(249,0.0532833333333333)

};
\addplot [blue, thick,style=dotted]
coordinates {
(10,0.413583333333333)
(11,0.39915)
(12,0.3601)
(13,0.311483333333333)
(14,0.28965)
(15,0.280666666666667)
(16,0.249616666666667)
(17,0.214766666666667)
(18,0.20655)
(19,0.206333333333333)
(20,0.203433333333333)
(21,0.19895)
(22,0.188366666666667)
(23,0.192516666666667)
(24,0.181116666666667)
(25,0.177533333333333)
(26,0.1825)
(27,0.1791)
(28,0.179183333333333)
(29,0.1723)
(30,0.1796)
(31,0.185666666666667)
(32,0.175866666666667)
(33,0.182916666666667)
(34,0.177983333333333)
(35,0.174033333333333)
(36,0.171283333333333)
(37,0.17575)
(38,0.182933333333333)
(39,0.196783333333333)
(40,0.211016666666667)
(41,0.217816666666667)
(42,0.222416666666667)
(43,0.209683333333333)
(44,0.20755)
(45,0.190783333333333)
(46,0.178316666666667)
(47,0.181933333333333)
(48,0.177783333333333)
(49,0.181333333333333)
(50,0.18305)
(51,0.181316666666667)
(52,0.189616666666667)
(53,0.184816666666667)
(54,0.179266666666667)
(55,0.178966666666667)
(56,0.1714)
(57,0.175183333333333)
(58,0.168283333333333)
(59,0.173816666666667)
(60,0.185816666666667)
(61,0.189233333333333)
(62,0.195033333333333)
(63,0.185916666666667)
(64,0.1904)
(65,0.1885)
(66,0.180716666666667)
(67,0.181616666666667)
(68,0.17995)
(69,0.194066666666667)
(70,0.194033333333333)
(71,0.189683333333333)
(72,0.201616666666667)
(73,0.19265)
(74,0.20525)
(75,0.1938)
(76,0.191983333333333)
(77,0.197616666666667)
(78,0.183066666666667)
(79,0.1852)
(80,0.178233333333333)
(81,0.171933333333333)
(82,0.16205)
(83,0.14865)
(84,0.1526)
(85,0.154083333333333)
(86,0.1427)
(87,0.15085)
(88,0.14935)
(89,0.157033333333333)
(90,0.149316666666667)
(91,0.146233333333333)
(92,0.158683333333333)
(93,0.152366666666667)
(94,0.165133333333333)
(95,0.1577)
(96,0.168033333333333)
(97,0.171066666666667)
(98,0.159183333333333)
(99,0.16115)
(100,0.153133333333333)
(101,0.155166666666667)
(102,0.145683333333333)
(103,0.147666666666667)
(104,0.1479)
(105,0.147583333333333)
(106,0.141916666666667)
(107,0.136883333333333)
(108,0.139416666666667)
(109,0.133283333333333)
(110,0.135583333333333)
(111,0.13215)
(112,0.138233333333333)
(113,0.142066666666667)
(114,0.142183333333333)
(115,0.144316666666667)
(116,0.1468)
(117,0.152333333333333)
(118,0.14775)
(119,0.163266666666667)
(120,0.153633333333333)
(121,0.1398)
(122,0.124016666666667)
(123,0.110766666666667)
(124,0.10115)
(125,0.0735666666666667)
(126,0.0693333333333333)
(127,0.0670333333333333)
(128,0.0656833333333333)
(129,0.0648666666666667)
(130,0.0639333333333333)
(131,0.0640166666666667)
(132,0.0643166666666666)
(133,0.0648)
(134,0.0649833333333333)
(135,0.0644666666666666)
(136,0.06505)
(137,0.0640166666666666)
(138,0.0638833333333333)
(139,0.0638833333333333)
(140,0.06435)
(141,0.0649)
(142,0.06405)
(143,0.06445)
(144,0.0651166666666666)
(145,0.0663166666666666)
(146,0.06685)
(147,0.0697333333333333)
(148,0.0699)
(149,0.07015)
(150,0.0707333333333333)
(151,0.0706833333333333)
(152,0.0706833333333333)
(153,0.06805)
(154,0.0719)
(155,0.0742)
(156,0.0732333333333333)
(157,0.07175)
(158,0.07005)
(159,0.0687333333333333)
(160,0.0639333333333333)
(161,0.0602833333333333)
(162,0.05865)
(163,0.0578166666666667)
(164,0.0575833333333333)
(165,0.0572666666666667)
(166,0.0570333333333333)
(167,0.0569833333333333)
(168,0.0570166666666667)
(169,0.0570333333333333)
(170,0.0570166666666667)
(171,0.05705)
(172,0.0572666666666667)
(173,0.0573166666666667)
(174,0.0572)
(175,0.0572833333333333)
(176,0.0570833333333333)
(177,0.0571166666666667)
(178,0.0568666666666667)
(179,0.0567166666666667)
(180,0.05665)
(181,0.0565)
(182,0.0564166666666667)
(183,0.0562166666666667)
(184,0.0559166666666666)
(185,0.05565)
(186,0.0555833333333333)
(187,0.0555166666666666)
(188,0.0555333333333333)
(189,0.0556666666666666)
(190,0.0558333333333333)
(191,0.0560833333333333)
(192,0.0560833333333333)
(193,0.0559666666666666)
(194,0.0559833333333333)
(195,0.0558)
(196,0.0557833333333333)
(197,0.0556666666666667)
(198,0.0555333333333333)
(199,0.0555)
(200,0.0553666666666667)
(201,0.0552666666666667)
(202,0.0551333333333333)
(203,0.0551333333333333)
(204,0.0551666666666667)
(205,0.0552166666666667)
(206,0.0551833333333333)
(207,0.0553)
(208,0.0554833333333333)
(209,0.0554)
(210,0.0553666666666667)
(211,0.0553833333333333)
(212,0.0553833333333333)
(213,0.0553333333333333)
(214,0.0552333333333333)
(215,0.0552333333333333)
(216,0.0552833333333333)
(217,0.0552166666666667)
(218,0.05525)
(219,0.0552333333333333)
(220,0.05525)
(221,0.0553666666666667)
(222,0.0554166666666667)
(223,0.0554666666666667)
(224,0.0555)
(225,0.0555)
(226,0.0554833333333333)
(227,0.0554)
(228,0.0553833333333333)
(229,0.0554)
(230,0.0553833333333333)
(231,0.0554)
(232,0.0553833333333333)
(233,0.0554)
(234,0.0553666666666667)
(235,0.0553166666666667)
(236,0.0552833333333333)
(237,0.0552)
(238,0.0551)
(239,0.0550333333333333)
(240,0.0549833333333333)
(241,0.05495)
(242,0.0549333333333333)
(243,0.0549666666666667)
(244,0.0551333333333334)
(245,0.0551833333333334)
(246,0.0552333333333333)
(247,0.0552833333333333)
(248,0.0553333333333333)
(249,0.0553833333333333)

};

\path [draw=black, fill opacity=0] (axis cs:10,0.4)--(axis cs:249,0.4);

\path [draw=black, fill opacity=0] (axis cs:249,-6.93889390390723e-18)--(axis cs:249,0.4);

\path [draw=black, fill opacity=0] (axis cs:10,-6.93889390390723e-18)--(axis cs:249,-6.93889390390723e-18);

\path [draw=black, fill opacity=0] (axis cs:10,-6.93889390390723e-18)--(axis cs:10,0.4);

\end{axis}

\end{tikzpicture}

}
\caption{Training loss and test error for ResNet model~\citep{he2016identity} for CIFAR-10.}
\label{fig:cifar10}
\end{figure}
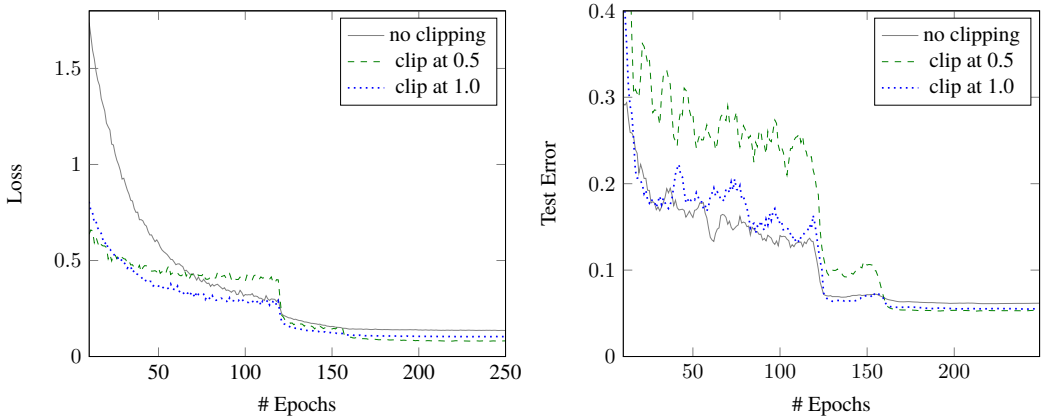

\subsection{Robustness to changes in hyperparameters}

The baseline algorithm studied in the previous subsection used
batch normalization.  Batch normalization tends to make the
network less sensitive to linear transformations with large
operator norms.  However, batch normalization includes
trainable scaling parameters (called $\gamma$ in the original
paper) that are applied after the normalization
step.  The existence of these parameters lead to a complicated
interaction between batch normalization and methods like ours
that act to control the norm of the linear transformation
applied before batch normalization.

Because the effect of regularizing the operator norm is more
easily understood in the absence of batch normalization, we
also
performed experiments with a baseline that
did not use batch normalization.   

Another possibility that we wanted to study was that using
a regularizer may make the process overall more stable, enabling
a larger learning rate.  We were generally interested in whether
operator-norm regularization made the training process more robust
to the choice of hyperparameters.

In one experiment, we started with the same baseline as the previous
subsection, but disabled batch normalization.  This baseline
started with a learning rate of 0.1, which was multiplied by a
factor 0.95 after every epoch.  We tried all combinations
of the following hyperparameters: 
(a) the norm of the ball projected onto (no projection, 0.5, 1.0, 1.5, 2.0);
(b) the initial learning rate (0.001, 0.003, 0.01, 0.03, 0.1);
(c) the minibatch size (32, 64);
(d) the number of epochs per decay of the learning rate (1,2,3).
We tried each of the $150$ combinations of the hyperparameters, trained
for 100 epochs, and measured the test error.  The results are
plotted in Figure~\ref{f:sweep_wo_batchnorm}.
\begin{figure}[h]
\centering
\begin{subfigure}[b]{0.45\textwidth}
\includegraphics[width=2.5in]{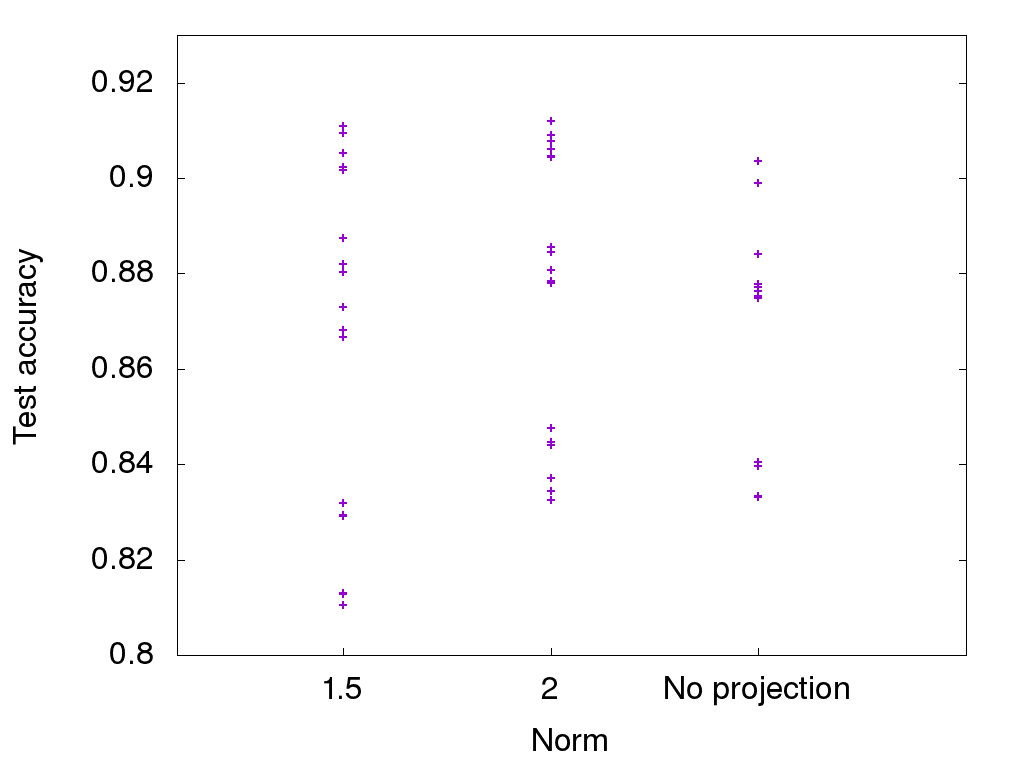}
\caption{Without batch normalization}
\label{f:sweep_wo_batchnorm}
\end{subfigure}
\quad
\begin{subfigure}[b]{0.45\textwidth}
\includegraphics[width=2.5in]{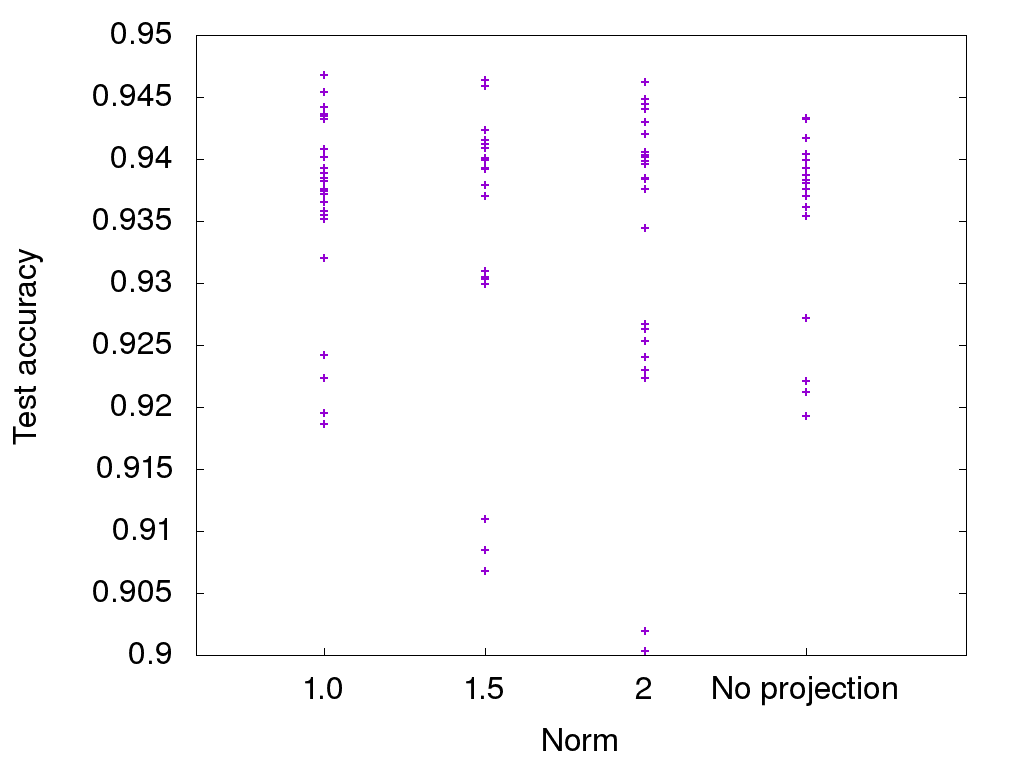}
\caption{With batch normalization}
\label{f:sweep_w_batchnorm}
\end{subfigure}
\caption{A scatterplot of the test errors obtained with different
hyperparameter combinations, and different operator-norm regularizers.}
\label{f:sweep}
\end{figure}
The operator norm regularization improved the best result, and
also made the process more robust to the choice of hyperparameters.

We conducted a similar experiment in the presence of batch normalization,
except using learning rates 0.01, 0.03, 0.1, 0.2, and 0.3.  
Those results are shown in Figure~\ref{f:sweep_w_batchnorm}.
Regularizing the operator norm helps, even in the presence of batch
normalization.

It appears that operator-norm regularization and batch normalization
are not redundant, and neither dominates the other.  We were surprised
by this.

%

\subsection{Comparison with reshaping $K$}\label{s:yoshida}
In Section~\ref{s:related} we mentioned that \citet{yoshida2017spectral} approximated the linear transformation induced by $K$ by reshaping $K$. This leads to an alternate regularization method --- we compute the spectrum of the reshaped $K$, and project it onto a ball using clipping, as above. We implemented this 
an experimented with it using
the same network and hyperparameters as
in Section~\ref{s:cifar} and found the following.
\begin{itemize}
\item We clipped the singular values of the reshaped $K$ every 100 steps. We tried various constants for the clipped value
(0.05, 0.1, 0.2, 0.5, 1.0), and found that the best accuracy we achieved,
using 0.2, was the same as the accuracy we achieved in Section~\ref{s:cifar}.
\item We clipped the singular values of the reshaped $K$ to these same values every step, and found that the best accuracy achieved was slightly worse than the accuracy achieved in the previous step. 
We observed similar behavior when we clipped norms using our method. 
\item Most surprisingly, we found that clipping norms by our method on
  a GPU was about 25\% faster than clipping the singular values of the
  reshaped $K$ --- when we clipped after every step, on the same
  machine, 10000 batches of CIFAR10 took 14490 seconds when we clipped
  the reshaped $K$, whereas they took 11004 seconds with our exact
  method! One possible explanation is parallelization --- clipping
  reshaped $K$ takes $O(m^3 k^2)$ flops, whereas our method does $m^2$
  FFTs, followed by $n^2$ $m \times m$ SVDs, which takes $O(m^3 n^2)$
  flops, but these can be parallelized and completed in as little as
  $O(n^2 \log n + m^3)$ time.
\end{itemize}

Clearly this is only one dataset, and the results may not generalize to other sets. However it does suggest that finding the full spectrum of the convolutional layer may be no worse than computing heuristic approximations, both in classification accuracy and speed.

 
\section{Acknowledgements}

We thank Tomer Koren, Nishal Shah, Yoram Singer and Chiyuan Zhang for
valuable conversations.



\bibliographystyle{plainnat}
\begin{small}
\begin{thebibliography}{24}
\providecommand{\natexlab}[1]{#1}
\providecommand{\url}[1]{\texttt{#1}}
\expandafter\ifx\csname urlstyle\endcsname\relax
  \providecommand{\doi}[1]{doi: #1}\else
  \providecommand{\doi}{doi: \begingroup \urlstyle{rm}\Url}\fi

\bibitem[Bartlett et~al.(2017)Bartlett, Foster, and
  Telgarsky]{bartlett2017spectrally}
P.~L. Bartlett, D.~J. Foster, and M.~J. Telgarsky.
\newblock Spectrally-normalized margin bounds for neural networks.
\newblock In \emph{NIPS}, pages 6240--6249, 2017.

\bibitem[Bibi et~al.(2019)Bibi, Ghanem, Koltun, and Ranftl]{bibi2018deep}
Adel Bibi, Bernard Ghanem, Vladlen Koltun, and Rene Ranftl.
\newblock Deep layers as stochastic solvers.
\newblock \emph{ICLR}, 2019.

\bibitem[Boyd and Dattorro(2003)]{boyd2003alternating}
S.~Boyd and J.~Dattorro.
\newblock Alternating projections, 2003.
\newblock \verb+https://web.stanford.edu/class/ee392o/alt_proj.pdf+.

\bibitem[Boyle and Dykstra(1986)]{boyle1986method}
J.~P. Boyle and R.~L. Dykstra.
\newblock A method for finding projections onto the intersection of convex sets
  in hilbert spaces.
\newblock In \emph{Advances in order restricted statistical inference}, pages
  28--47. Springer, 1986.

\bibitem[Chao(1974)]{chao1974note}
C.~Chao.
\newblock A note on block circulant matrices.
\newblock \emph{Kyungpook Mathematical Journal}, 14:\penalty0 97--100, 1974.

\bibitem[Cheney and Goldstein(1959)]{CheneyGoldstein1959}
W.~Cheney and A.~A. Goldstein.
\newblock Proximity maps for convex sets.
\newblock \emph{Proceedings of the American Mathematical Society}, 10\penalty0
  (3):\penalty0 448--450, 1959.
\newblock ISSN 00029939, 10886826.
\newblock URL \url{http://www.jstor.org/stable/2032864}.

\bibitem[Cisse et~al.(2017)Cisse, Bojanowski, Grave, Dauphin, and
  Usunier]{cisse2017parseval}
M.~Cisse, P.~Bojanowski, E.~Grave, Y.~Dauphin, and N.~Usunier.
\newblock Parseval networks: Improving robustness to adversarial examples.
\newblock \emph{ICML}, 2017.

\bibitem[Drucker and Le~Cun(1992)]{drucker1992improving}
H.~Drucker and Y.~Le~Cun.
\newblock Improving generalization performance using double backpropagation.
\newblock \emph{IEEE Transactions on Neural Networks}, 3\penalty0 (6):\penalty0
  991--997, 1992.

\bibitem[Goodfellow et~al.(2016)Goodfellow, Bengio, and
  Courville]{goodfellow2016deep}
I.~Goodfellow, Y.~Bengio, and A.~Courville.
\newblock \emph{Deep Learning}.
\newblock MIT Press, 2016.
\newblock \url{http://www.deeplearningbook.org}.

\bibitem[Gouk et~al.(2018{\natexlab{a}})Gouk, Frank, Pfahringer, and
  Cree]{gouk2018regularisation}
H.~Gouk, E.~Frank, B.~Pfahringer, and M.~Cree.
\newblock Regularisation of neural networks by enforcing lipschitz continuity.
\newblock \emph{arXiv preprint arXiv:1804.04368}, 2018{\natexlab{a}}.

\bibitem[Gouk et~al.(2018{\natexlab{b}})Gouk, Pfahringer, Frank, and
  Cree]{gouk2018maxgain}
H.~Gouk, B.~Pfahringer, E.~Frank, and M.~Cree.
\newblock {MaxGain}: Regularisation of neural networks by constraining
  activation magnitudes.
\newblock \emph{arXiv preprint arXiv:1804.05965}, 2018{\natexlab{b}}.

\bibitem[Gray(2006)]{gray2006toeplitz}
R.~M. Gray.
\newblock Toeplitz and circulant matrices: A review.
\newblock \emph{Foundations and Trends{\textregistered} in Communications and
  Information Theory}, 2\penalty0 (3):\penalty0 155--239, 2006.

\bibitem[He et~al.(2016)He, Zhang, Ren, and Sun]{he2016identity}
K.~He, X.~Zhang, S.~Ren, and J.~Sun.
\newblock Identity mappings in deep residual networks.
\newblock In \emph{European Conference on Computer Vision}, pages 630--645.
  Springer, 2016.
\newblock
  \verb+http://download.tensorflow.org/models/official/resnet_v2_imagenet_checkpoint.tar.gz+;
  downloaded on on 5/1/18.

\bibitem[Hein and Andriushchenko(2017)]{hein2017formal}
M.~Hein and M.~Andriushchenko.
\newblock Formal guarantees on the robustness of a classifier against
  adversarial manipulation.
\newblock In \emph{NIPS}, pages 2266--2276, 2017.

\bibitem[Hochreiter(1991)]{hochreiter1991untersuchungen}
S.~Hochreiter.
\newblock Untersuchungen zu dynamischen neuronalen netzen.
\newblock \emph{Diploma, Technische Universit{\"a}t M{\"u}nchen}, 91:\penalty0
  1, 1991.

\bibitem[Hochreiter et~al.(2001)Hochreiter, Bengio, Frasconi, Schmidhuber,
  et~al.]{hochreiter2001gradient}
S.~Hochreiter, Y.~Bengio, P.~Frasconi, J.~Schmidhuber, et~al.
\newblock Gradient flow in recurrent nets: the difficulty of learning long-term
  dependencies, 2001.

\bibitem[Horn and Johnson(2012)]{horn2013matrix}
R.~A. Horn and C.~R. Johnson.
\newblock \emph{Matrix Analysis}.
\newblock Cambridge University Press, New York, NY, USA, 2nd edition, 2012.
\newblock ISBN 0521548233, 9780521548236.

\bibitem[Jain(1989)]{jain1989fundamentals}
A.~K. Jain.
\newblock \emph{Fundamentals of digital image processing}.
\newblock Englewood Cliffs, NJ: Prentice Hall,, 1989.

\bibitem[LeCun et~al.(1998)LeCun, Bottou, Bengio, and
  Haffner]{lecun1998gradient}
Y.~LeCun, L.~Bottou, Y.~Bengio, and P.~Haffner.
\newblock Gradient-based learning applied to document recognition.
\newblock \emph{Proceedings of the IEEE}, 86\penalty0 (11):\penalty0
  2278--2324, 1998.

\bibitem[Lefkimmiatis et~al.(2013)Lefkimmiatis, Ward, and
  Unser]{lefkimmiatis2013hessian}
S.~Lefkimmiatis, J.~P. Ward, and M.~Unser.
\newblock Hessian {S}chatten-norm regularization for linear inverse problems.
\newblock \emph{IEEE transactions on image processing}, 22\penalty0
  (5):\penalty0 1873--1888, 2013.

\bibitem[Miyato et~al.(2018)Miyato, Kataoka, Koyama, and
  Yoshida]{miyato2018spectral}
T.~Miyato, T.~Kataoka, M.~Koyama, and Y.~Yoshida.
\newblock Spectral normalization for generative adversarial networks.
\newblock \emph{ICLR}, 2018.

\bibitem[Pennington et~al.(2017)Pennington, Schoenholz, and
  Ganguli]{pennington2017resurrecting}
J.~Pennington, S.~Schoenholz, and S.~Ganguli.
\newblock Resurrecting the sigmoid in deep learning through dynamical isometry:
  theory and practice.
\newblock In \emph{Advances in neural information processing systems}, pages
  4788--4798, 2017.

\bibitem[Saxe et~al.(2013)Saxe, McClelland, and Ganguli]{saxe2013exact}
A.~M. Saxe, J.~L. McClelland, and S.~Ganguli.
\newblock Exact solutions to the nonlinear dynamics of learning in deep linear
  neural networks.
\newblock \emph{arXiv preprint arXiv:1312.6120}, 2013.

\bibitem[Yoshida and Miyato(2017)]{yoshida2017spectral}
Y.~Yoshida and T.~Miyato.
\newblock Spectral norm regularization for improving the generalizability of
  deep learning.
\newblock \emph{arXiv preprint arXiv:1705.10941}, 2017.

\end{thebibliography}


\end{small}

\appendix

\section{NumPy code for operator norm projection}
\label{a:proj_code}

\begin{verbatim}
def Clip_OperatorNorm(kernel, input_shape, clip_to):
  transform_coefficients = np.fft.fft2(kernel, input_shape, axes=[0, 1])
  U, D, V = np.linalg.svd(transform_coefficients, compute_uv=True, full_matrices=False)
  D_clipped = np.minimum(D, clip_to)
  if kernel.shape[2] > kernel.shape[3]:
    clipped_transform_coefficients = np.matmul(U, D_clipped[..., None] * V)
  else:
    clipped_transform_coefficients = np.matmul(U * D_clipped[..., None, :], V)
  clipped_kernel = np.fft.ifft2(clipped_transform_coefficients, axes=[0, 1]).real
  return clipped_kernel[np.ix_(*[range(d) for d in kernel.shape])]
\end{verbatim}

\section{Test error vs.\ training time}
\label{a:cifar10.error_vs_time}

Figure~\ref{fig:cifar10.error_vs_time} shows the plots of test error vs.\ training time
in our CIFAR-10 experiment.

\begin{figure}[H]
\centering
\resizebox{0.5\textwidth}{!}{
%
%
%
%
\begin{tikzpicture}

\begin{axis}[
xlabel={Time(seconds)},
ylabel={Test Error},
xmin=2241, xmax=56862,
ymin=0, ymax=0.4,
axis on top,
legend entries={{no clipping},{clip at 0.5},{clip at 1.0}}
]
\addplot [lightgray!66.928104575163388!black]
coordinates {
(2241,0.292583333333333)
(2467,0.290866666666667)
(2692,0.293833333333333)
(2917,0.275116666666667)
(3153,0.2593)
(3379,0.2586)
(3604,0.25405)
(3828,0.239666666666667)
(4053,0.235233333333333)
(4278,0.212183333333333)
(4504,0.222316666666667)
(4729,0.2154)
(4953,0.206066666666667)
(5178,0.206233333333333)
(5402,0.192366666666667)
(5626,0.192183333333333)
(5849,0.177733333333333)
(6065,0.179216666666667)
(6287,0.177466666666667)
(6510,0.1699)
(6725,0.172833333333333)
(6950,0.168583333333333)
(7173,0.173016666666667)
(7395,0.178266666666667)
(7618,0.184016666666667)
(7842,0.194233333333333)
(8065,0.189366666666667)
(8280,0.194716666666667)
(8513,0.18715)
(8736,0.174616666666667)
(8960,0.1809)
(9172,0.172166666666667)
(9394,0.1696)
(9605,0.17005)
(9827,0.170466666666667)
(10049,0.169033333333333)
(10272,0.161316666666667)
(10495,0.164533333333333)
(10710,0.16825)
(10932,0.16425)
(11144,0.161)
(11367,0.169183333333333)
(11590,0.169933333333333)
(11805,0.172266666666667)
(12028,0.17785)
(12250,0.177083333333333)
(12472,0.1781)
(12683,0.166166666666667)
(12906,0.160416666666667)
(13128,0.148533333333333)
(13349,0.1379)
(13573,0.1348)
(13796,0.133183333333333)
(14009,0.142483333333333)
(14231,0.1474)
(14454,0.148866666666667)
(14676,0.16315)
(14888,0.164416666666667)
(15108,0.163683333333333)
(15331,0.160916666666667)
(15554,0.153116666666667)
(15776,0.1567)
(15998,0.149433333333333)
(16209,0.146483333333333)
(16430,0.148533333333333)
(16652,0.150783333333333)
(16873,0.1522)
(17096,0.162066666666667)
(17319,0.154466666666667)
(17532,0.155616666666667)
(17754,0.1547)
(17977,0.15035)
(18199,0.151)
(18412,0.140583333333333)
(18636,0.149533333333333)
(18850,0.147366666666667)
(19071,0.144833333333333)
(19293,0.144466666666667)
(19515,0.141233333333333)
(19737,0.141716666666667)
(19949,0.13485)
(20172,0.13665)
(20396,0.133866666666667)
(20620,0.1403)
(20831,0.138583333333333)
(21056,0.137016666666667)
(21278,0.132033333333333)
(21501,0.129783333333333)
(21713,0.1337)
(21935,0.128183333333333)
(22158,0.139933333333333)
(22370,0.138633333333333)
(22593,0.13915)
(22804,0.13875)
(23025,0.135783333333333)
(23246,0.13415)
(23468,0.1262)
(23690,0.129433333333333)
(23901,0.1272)
(24123,0.130816666666667)
(24346,0.131816666666667)
(24557,0.132166666666667)
(24783,0.132533333333333)
(24996,0.127233333333333)
(25217,0.132566666666667)
(25431,0.1307)
(25654,0.134916666666667)
(25896,0.136866666666667)
(26119,0.133583333333333)
(26341,0.133033333333333)
(26556,0.122466666666667)
(26777,0.112566666666667)
(27000,0.100833333333333)
(27225,0.0862666666666667)
(27463,0.0791666666666666)
(27697,0.07205)
(27910,0.0708166666666666)
(28132,0.07025)
(28355,0.0700333333333333)
(28568,0.0704666666666666)
(28791,0.0693333333333333)
(29014,0.06935)
(29235,0.0693333333333333)
(29458,0.0693333333333333)
(29669,0.0692166666666666)
(29900,0.0686833333333333)
(30112,0.06865)
(30333,0.06875)
(30555,0.06885)
(30776,0.06955)
(31000,0.0695)
(31223,0.07035)
(31445,0.07075)
(31657,0.0706833333333333)
(31879,0.0707166666666666)
(32100,0.0711666666666667)
(32313,0.0712833333333333)
(32535,0.0705333333333333)
(32757,0.07085)
(32978,0.071)
(33200,0.0716)
(33413,0.0714666666666667)
(33634,0.0717333333333333)
(33856,0.0717833333333333)
(34077,0.07205)
(34288,0.0721666666666667)
(34511,0.0716)
(34732,0.0705333333333333)
(34944,0.0696666666666667)
(35165,0.0689666666666667)
(35386,0.06785)
(35599,0.06665)
(35821,0.06595)
(36042,0.0655)
(36263,0.0650666666666667)
(36485,0.06475)
(36696,0.0642166666666666)
(36917,0.0638)
(37139,0.0635)
(37361,0.06335)
(37582,0.0633333333333333)
(37803,0.0632833333333333)
(38014,0.0633833333333333)
(38244,0.0635166666666667)
(38455,0.06355)
(38676,0.0633833333333333)
(38901,0.0632166666666666)
(39122,0.0631166666666667)
(39344,0.0630333333333333)
(39566,0.0629833333333333)
(39787,0.06275)
(40008,0.0626833333333333)
(40230,0.0625833333333334)
(40441,0.0625333333333334)
(40660,0.0624666666666667)
(40882,0.0623833333333333)
(41103,0.0624166666666667)
(41324,0.06245)
(41544,0.0623666666666667)
(41765,0.0622666666666667)
(41987,0.0620833333333333)
(42198,0.0620666666666667)
(42420,0.0620333333333333)
(42640,0.0619333333333333)
(42862,0.0618833333333333)
(43085,0.0618)
(43305,0.06175)
(43525,0.0615333333333333)
(43746,0.0615)
(43967,0.0615833333333333)
(44187,0.0616333333333333)
(44408,0.06165)
(44619,0.0617)
(44841,0.0618333333333333)
(45061,0.06175)
(45283,0.0616833333333333)
(45504,0.0617166666666667)
(45715,0.0618333333333333)
(45936,0.0619)
(46158,0.0617833333333333)
(46379,0.06185)
(46590,0.0617666666666667)
(46812,0.0616333333333333)
(47033,0.0614666666666667)
(47254,0.0613333333333333)
(47465,0.0612666666666667)
(47686,0.0612)
(47909,0.0611833333333334)
(48120,0.0612166666666667)
(48350,0.0612166666666667)
(48561,0.0612166666666667)
(48781,0.0612666666666667)
(49000,0.0612333333333334)
(49221,0.0612166666666667)
(49442,0.0611833333333334)
(49663,0.06125)
(49873,0.0613166666666667)
(50095,0.06135)
(50317,0.0613333333333333)
(50548,0.0613833333333333)
(50768,0.0613833333333333)
(50979,0.0614)
(51200,0.0613833333333333)
(51422,0.0613833333333333)
(51642,0.0614666666666667)
(51864,0.0614666666666667)
(52085,0.0615)
(52306,0.0615)
(52527,0.0614833333333333)
(52738,0.0614833333333333)
(52959,0.0614666666666667)
(53181,0.0615)
(53401,0.0615)
(53622,0.0615166666666667)
(53832,0.0615833333333333)
(54055,0.0616166666666666)
(54275,0.06165)
(54496,0.0616333333333333)
(54718,0.0616833333333333)
(54930,0.0617)

};
\addplot [green!50.0!black,style=dashed]
coordinates {
(2324,0.743583333333333)
(2559,0.653466666666667)
(2793,0.580133333333333)
(3028,0.517816666666667)
(3261,0.446416666666667)
(3504,0.366616666666667)
(3740,0.303433333333333)
(3970,0.309016666666667)
(4212,0.304483333333333)
(4444,0.299666666666667)
(4678,0.3232)
(4910,0.36205)
(5141,0.358666666666667)
(5372,0.349833333333333)
(5606,0.339833333333333)
(5837,0.344266666666667)
(6067,0.3157)
(6299,0.2823)
(6530,0.285416666666667)
(6772,0.284766666666667)
(6995,0.27955)
(7226,0.268216666666667)
(7467,0.30215)
(7691,0.324083333333333)
(7923,0.329716666666667)
(8155,0.330816666666667)
(8377,0.322033333333333)
(8617,0.314983333333333)
(8838,0.280216666666667)
(9069,0.253966666666667)
(9300,0.249083333333333)
(9530,0.243733333333333)
(9760,0.272016666666667)
(9982,0.284816666666667)
(10213,0.27995)
(10443,0.310666666666667)
(10673,0.308016666666667)
(10903,0.3054)
(11133,0.285033333333333)
(11356,0.277116666666667)
(11587,0.280516666666667)
(11819,0.254033333333333)
(12050,0.241016666666667)
(12280,0.2505)
(12509,0.25705)
(12739,0.250533333333333)
(12959,0.250716666666667)
(13188,0.260016666666667)
(13418,0.267983333333333)
(13647,0.258933333333333)
(13877,0.25365)
(14097,0.2537)
(14327,0.252916666666667)
(14557,0.238966666666667)
(14787,0.246816666666667)
(15016,0.268683333333333)
(15235,0.2775)
(15466,0.2802)
(15696,0.27515)
(15915,0.273083333333333)
(16144,0.289633333333333)
(16373,0.271266666666667)
(16603,0.270016666666667)
(16822,0.271583333333333)
(17052,0.275)
(17282,0.283416666666667)
(17511,0.253366666666667)
(17740,0.242766666666667)
(17960,0.253383333333333)
(18189,0.267066666666667)
(18420,0.262833333333333)
(18651,0.258016666666667)
(18871,0.252533333333333)
(19101,0.255483333333333)
(19330,0.243083333333333)
(19562,0.224933333333333)
(19781,0.247883333333333)
(20010,0.2428)
(20229,0.260933333333333)
(20458,0.26125)
(20688,0.245766666666667)
(20918,0.2445)
(21139,0.23415)
(21367,0.260166666666667)
(21598,0.25045)
(21818,0.2486)
(22047,0.261)
(22279,0.273716666666667)
(22498,0.269833333333333)
(22726,0.244516666666667)
(22956,0.23655)
(23176,0.241416666666667)
(23396,0.235566666666667)
(23614,0.218483333333333)
(23834,0.210033333333333)
(24045,0.22535)
(24270,0.2405)
(24480,0.240933333333333)
(24702,0.250133333333333)
(24911,0.253516666666667)
(25131,0.254633333333333)
(25351,0.258066666666667)
(25570,0.247816666666667)
(25789,0.254766666666667)
(26019,0.2391)
(26238,0.237483333333333)
(26447,0.228183333333333)
(26667,0.23015)
(26887,0.231166666666667)
(27108,0.21595)
(27320,0.208216666666667)
(27548,0.193816666666667)
(27769,0.187066666666667)
(27989,0.146816666666667)
(28200,0.1263)
(28420,0.1144)
(28642,0.104816666666667)
(28852,0.0970166666666666)
(29072,0.0939666666666666)
(29293,0.0954666666666666)
(29513,0.100083333333333)
(29734,0.0996)
(29953,0.0997333333333333)
(30162,0.1012)
(30383,0.0976333333333333)
(30603,0.0978333333333333)
(30823,0.0940333333333333)
(31044,0.0922833333333333)
(31253,0.0921333333333333)
(31473,0.0913333333333333)
(31693,0.09395)
(31913,0.0963)
(32122,0.09525)
(32342,0.09795)
(32561,0.10095)
(32781,0.102233333333333)
(33000,0.103833333333333)
(33210,0.102583333333333)
(33429,0.110133333333333)
(33648,0.107933333333333)
(33868,0.10585)
(34087,0.1065)
(34306,0.1063)
(34525,0.10565)
(34746,0.09935)
(34956,0.0993)
(35175,0.0940666666666667)
(35394,0.08795)
(35614,0.0799833333333333)
(35833,0.0721)
(36043,0.06555)
(36262,0.0599166666666666)
(36482,0.0574)
(36702,0.0561666666666667)
(36921,0.0553666666666666)
(37141,0.0549833333333333)
(37350,0.0547)
(37570,0.0545)
(37790,0.0543833333333333)
(38009,0.05395)
(38218,0.05385)
(38437,0.0538166666666667)
(38657,0.0538666666666667)
(38876,0.0540666666666667)
(39095,0.0540333333333333)
(39314,0.05395)
(39534,0.0537833333333333)
(39754,0.0535666666666667)
(39963,0.0535166666666667)
(40182,0.0532666666666667)
(40402,0.0531666666666667)
(40622,0.0531833333333334)
(40841,0.0532333333333333)
(41050,0.0533333333333333)
(41269,0.0533666666666667)
(41488,0.0533)
(41707,0.05325)
(41927,0.05305)
(42137,0.0528166666666667)
(42356,0.05265)
(42565,0.05255)
(42784,0.0527666666666667)
(43004,0.0527)
(43224,0.0529166666666667)
(43444,0.053)
(43664,0.0530833333333333)
(43884,0.0531166666666667)
(44103,0.05305)
(44312,0.0533833333333333)
(44532,0.0534166666666667)
(44752,0.05345)
(44961,0.0534666666666667)
(45181,0.0534333333333333)
(45401,0.0533333333333333)
(45621,0.05305)
(45841,0.0531166666666666)
(46060,0.0533)
(46280,0.0532)
(46499,0.0531166666666666)
(46708,0.0530833333333333)
(46928,0.0532333333333333)
(47146,0.0529166666666667)
(47366,0.05255)
(47575,0.0526166666666667)
(47794,0.0528)
(48013,0.05295)
(48232,0.0528833333333333)
(48451,0.0530833333333333)
(48671,0.05325)
(48890,0.0533)
(49109,0.0532333333333333)
(49329,0.0531333333333333)
(49538,0.0531166666666667)
(49758,0.0530833333333333)
(49988,0.0530666666666667)
(50209,0.0530666666666667)
(50433,0.05305)
(50653,0.0530833333333333)
(50873,0.05305)
(51093,0.0530166666666667)
(51303,0.0531)
(51523,0.0530166666666667)
(51744,0.053)
(51954,0.05295)
(52174,0.0529166666666666)
(52393,0.0530333333333333)
(52613,0.053)
(52823,0.053)
(53042,0.05295)
(53263,0.0529666666666667)
(53482,0.0530333333333333)
(53704,0.0529833333333333)
(53924,0.0529833333333333)
(54134,0.0531333333333333)
(54355,0.0532833333333333)
(54575,0.0533666666666667)
(54795,0.0534)
(55005,0.0534)
(55225,0.0534166666666667)
(55444,0.0532833333333333)

};
\addplot [blue,thick, style=dotted]
coordinates {
(2323,0.413583333333333)
(2546,0.39915)
(2780,0.3601)
(3015,0.311483333333333)
(3249,0.28965)
(3481,0.280666666666667)
(3724,0.249616666666667)
(3957,0.214766666666667)
(4189,0.20655)
(4423,0.206333333333333)
(4656,0.203433333333333)
(4888,0.19895)
(5120,0.188366666666667)
(5352,0.192516666666667)
(5582,0.181116666666667)
(5813,0.177533333333333)
(6043,0.1825)
(6274,0.1791)
(6509,0.179183333333333)
(6742,0.1723)
(6974,0.1796)
(7205,0.185666666666667)
(7435,0.175866666666667)
(7667,0.182916666666667)
(7889,0.177983333333333)
(8121,0.174033333333333)
(8351,0.171283333333333)
(8583,0.17575)
(8815,0.182933333333333)
(9037,0.196783333333333)
(9269,0.211016666666667)
(9501,0.217816666666667)
(9731,0.222416666666667)
(9951,0.209683333333333)
(10173,0.20755)
(10403,0.190783333333333)
(10633,0.178316666666667)
(10862,0.181933333333333)
(11093,0.177783333333333)
(11325,0.181333333333333)
(11546,0.18305)
(11777,0.181316666666667)
(12009,0.189616666666667)
(12240,0.184816666666667)
(12460,0.179266666666667)
(12690,0.178966666666667)
(12920,0.1714)
(13150,0.175183333333333)
(13371,0.168283333333333)
(13601,0.173816666666667)
(13832,0.185816666666667)
(14052,0.189233333333333)
(14282,0.195033333333333)
(14513,0.185916666666667)
(14743,0.1904)
(14973,0.1885)
(15193,0.180716666666667)
(15423,0.181616666666667)
(15654,0.17995)
(15873,0.194066666666667)
(16103,0.194033333333333)
(16332,0.189683333333333)
(16562,0.201616666666667)
(16792,0.19265)
(17011,0.20525)
(17241,0.1938)
(17460,0.191983333333333)
(17690,0.197616666666667)
(17919,0.183066666666667)
(18149,0.1852)
(18379,0.178233333333333)
(18599,0.171933333333333)
(18830,0.16205)
(19059,0.14865)
(19288,0.1526)
(19508,0.154083333333333)
(19738,0.1427)
(19967,0.15085)
(20187,0.14935)
(20417,0.157033333333333)
(20646,0.149316666666667)
(20876,0.146233333333333)
(21105,0.158683333333333)
(21335,0.152366666666667)
(21564,0.165133333333333)
(21784,0.1577)
(22014,0.168033333333333)
(22243,0.171066666666667)
(22472,0.159183333333333)
(22692,0.16115)
(22922,0.153133333333333)
(23151,0.155166666666667)
(23371,0.145683333333333)
(23600,0.147666666666667)
(23831,0.1479)
(24061,0.147583333333333)
(24280,0.141916666666667)
(24512,0.136883333333333)
(24731,0.139416666666667)
(24961,0.133283333333333)
(25181,0.135583333333333)
(25401,0.13215)
(25630,0.138233333333333)
(25872,0.142066666666667)
(26091,0.142183333333333)
(26311,0.144316666666667)
(26530,0.1468)
(26750,0.152333333333333)
(26970,0.14775)
(27192,0.163266666666667)
(27417,0.153633333333333)
(27639,0.1398)
(27860,0.124016666666667)
(28081,0.110766666666667)
(28301,0.10115)
(28512,0.0735666666666667)
(28742,0.0693333333333333)
(28963,0.0670333333333333)
(29193,0.0656833333333333)
(29417,0.0648666666666667)
(29637,0.0639333333333333)
(29867,0.0640166666666667)
(30087,0.0643166666666666)
(30317,0.0648)
(30536,0.0649833333333333)
(30756,0.0644666666666666)
(30986,0.06505)
(31207,0.0640166666666666)
(31437,0.0638833333333333)
(31676,0.0638833333333333)
(31906,0.06435)
(32145,0.0649)
(32375,0.06405)
(32605,0.06445)
(32834,0.0651166666666666)
(33063,0.0663166666666666)
(33294,0.06685)
(33533,0.0697333333333333)
(33762,0.0699)
(33992,0.07015)
(34222,0.0707333333333333)
(34452,0.0706833333333333)
(34691,0.0706833333333333)
(34920,0.06805)
(35149,0.0719)
(35379,0.0742)
(35609,0.0732333333333333)
(35838,0.07175)
(36068,0.07005)
(36308,0.0687333333333333)
(36537,0.0639333333333333)
(36777,0.0602833333333333)
(37007,0.05865)
(37237,0.0578166666666667)
(37466,0.0575833333333333)
(37706,0.0572666666666667)
(37926,0.0570333333333333)
(38155,0.0569833333333333)
(38384,0.0570166666666667)
(38625,0.0570333333333333)
(38855,0.0570166666666667)
(39084,0.05705)
(39314,0.0572666666666667)
(39544,0.0573166666666667)
(39783,0.0572)
(40013,0.0572833333333333)
(40243,0.0570833333333333)
(40472,0.0571166666666667)
(40701,0.0568666666666667)
(40931,0.0567166666666667)
(41161,0.05665)
(41390,0.0565)
(41620,0.0564166666666667)
(41850,0.0562166666666667)
(42080,0.0559166666666666)
(42300,0.05565)
(42520,0.0555833333333333)
(42740,0.0555166666666666)
(42960,0.0555333333333333)
(43180,0.0556666666666666)
(43409,0.0558333333333333)
(43629,0.0560833333333333)
(43858,0.0560833333333333)
(44079,0.0559666666666666)
(44309,0.0559833333333333)
(44528,0.0558)
(44758,0.0557833333333333)
(44988,0.0556666666666667)
(45217,0.0555333333333333)
(45446,0.0555)
(45676,0.0553666666666667)
(45916,0.0552666666666667)
(46145,0.0551333333333333)
(46375,0.0551333333333333)
(46604,0.0551666666666667)
(46833,0.0552166666666667)
(47073,0.0551833333333333)
(47303,0.0553)
(47532,0.0554833333333333)
(47761,0.0554)
(47990,0.0553666666666667)
(48220,0.0553833333333333)
(48449,0.0553833333333333)
(48678,0.0553333333333333)
(48908,0.0552333333333333)
(49137,0.0552333333333333)
(49367,0.0552833333333333)
(49606,0.0552166666666667)
(49825,0.05525)
(50054,0.0552333333333333)
(50276,0.05525)
(50497,0.0553666666666667)
(50717,0.0554166666666667)
(50947,0.0554666666666667)
(51177,0.0555)
(51407,0.0555)
(51636,0.0554833333333333)
(51856,0.0554)
(52085,0.0553833333333333)
(52304,0.0554)
(52534,0.0553833333333333)
(52763,0.0554)
(52991,0.0553833333333333)
(53210,0.0554)
(53439,0.0553666666666667)
(53669,0.0553166666666667)
(53888,0.0552833333333333)
(54117,0.0552)
(54347,0.0551)
(54576,0.0550333333333333)
(54805,0.0549833333333333)
(55034,0.05495)
(55264,0.0549333333333333)
(55493,0.0549666666666667)
(55723,0.0551333333333334)
(55942,0.0551833333333334)
(56172,0.0552333333333333)
(56402,0.0552833333333333)
(56633,0.0553333333333333)
(56862,0.0553833333333333)

};

\path [draw=black, fill opacity=0] (axis cs:2241,-6.93889390390723e-18)--(axis cs:2241,0.4);

\path [draw=black, fill opacity=0] (axis cs:56862,-6.93889390390723e-18)--(axis cs:56862,0.4);

\path [draw=black, fill opacity=0] (axis cs:2241,0.4)--(axis cs:56862,0.4);

\path [draw=black, fill opacity=0] (axis cs:2241,-6.93889390390723e-18)--(axis cs:56862,-6.93889390390723e-18);

\end{axis}

\end{tikzpicture}

}
\caption{Test error vs.\ training time for ResNet model~\protect\citep{he2016identity} for CIFAR-10.}
\label{fig:cifar10.error_vs_time}
\end{figure}
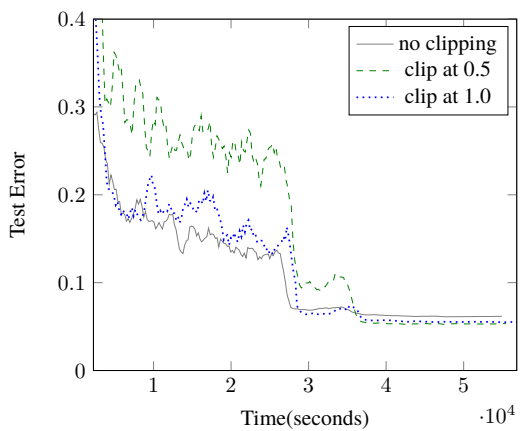

\section{The official pre-trained ResNet model}
\label{a:resnet}

\begin{figure}[h]
\centering
\includegraphics[width=2.5in]{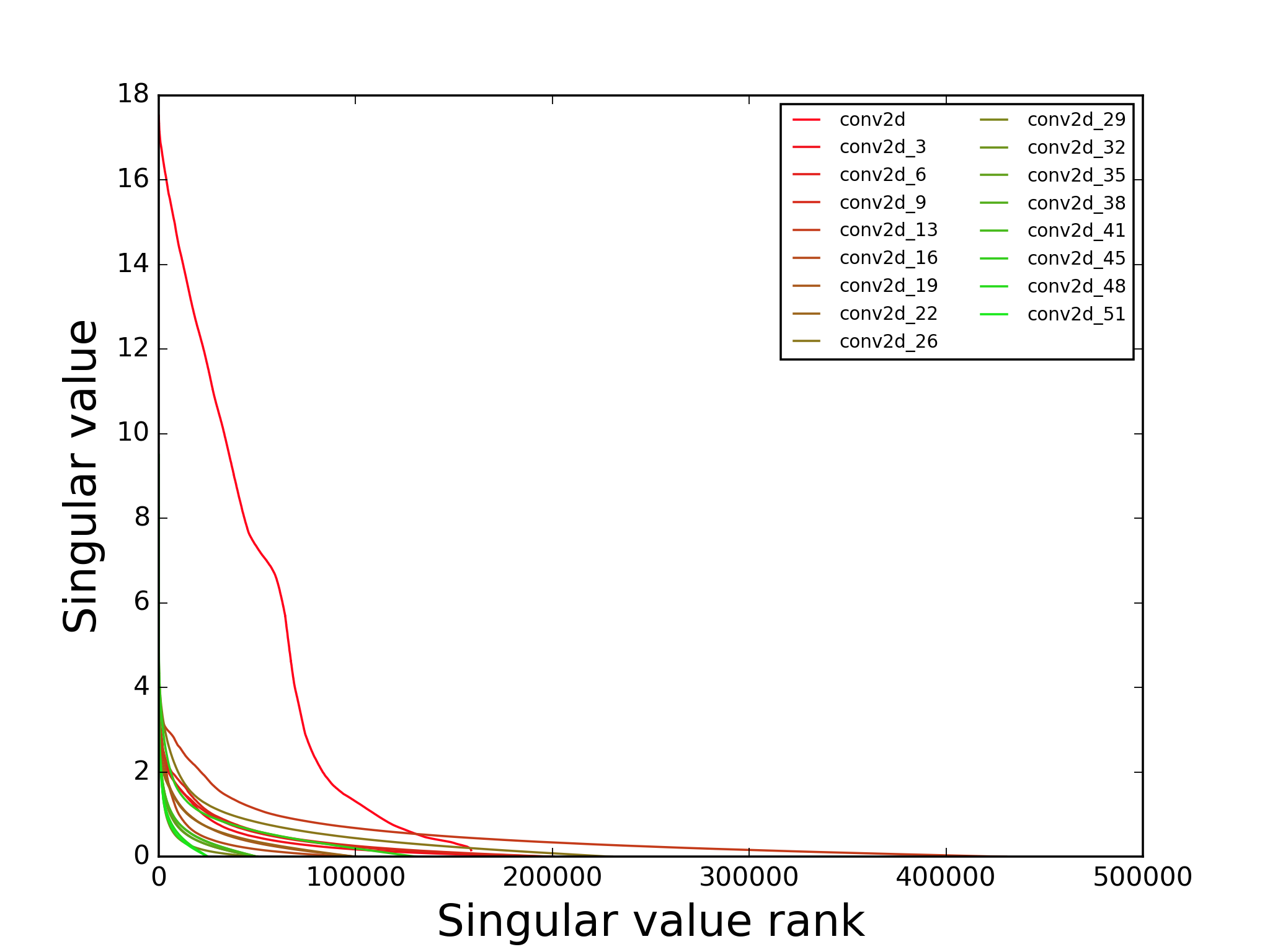}
\caption{Plot of the singular values of the linear operators associated with
the convolutional layers of the pretrained "ResNet V2" from the TensorFlow website.  }
\label{f:resnet_conv_svd_plot}
\end{figure}
The singular values of the convolutional layers from
the 
official ``Resnet V2'' 
pre-trained model
\citep{he2016identity}
are plotted in Figure~\ref{f:resnet_conv_svd_plot}.
The singular values
are ordered by value.  Only layers with kernels larger than $1 \times 1$
are plotted.  The curves are plotted with a mixture of red and green; layers closer to the input are
plotted with colors with a greater share of red.
The transformations with the largest operator norms are closest to the inputs.
As the data has undergone more rounds of processing, as we proceed through the layers, the number of non-negligible
singular values increases for a while, but at the end, it tapers off.  

In Figure~\ref{f:resnet_conv_svd_plot}, we plotted the singular values ordered by value. It can be observed that while singular values in the first layer are much larger than the rest, many layers have a lot of singular values that are pretty big.  For example, most of the layers have at least 10000 singular values that are at least 1.  To give a complementary view, Figure~\ref{fig:ratio} presents a plot of the ratios of the singular values in each layer with the largest singular value in that layer. We see that the effective rank of the convolutional layers is larger closer to the inputs.

\begin{figure}
\centering
\includegraphics[width=3in]{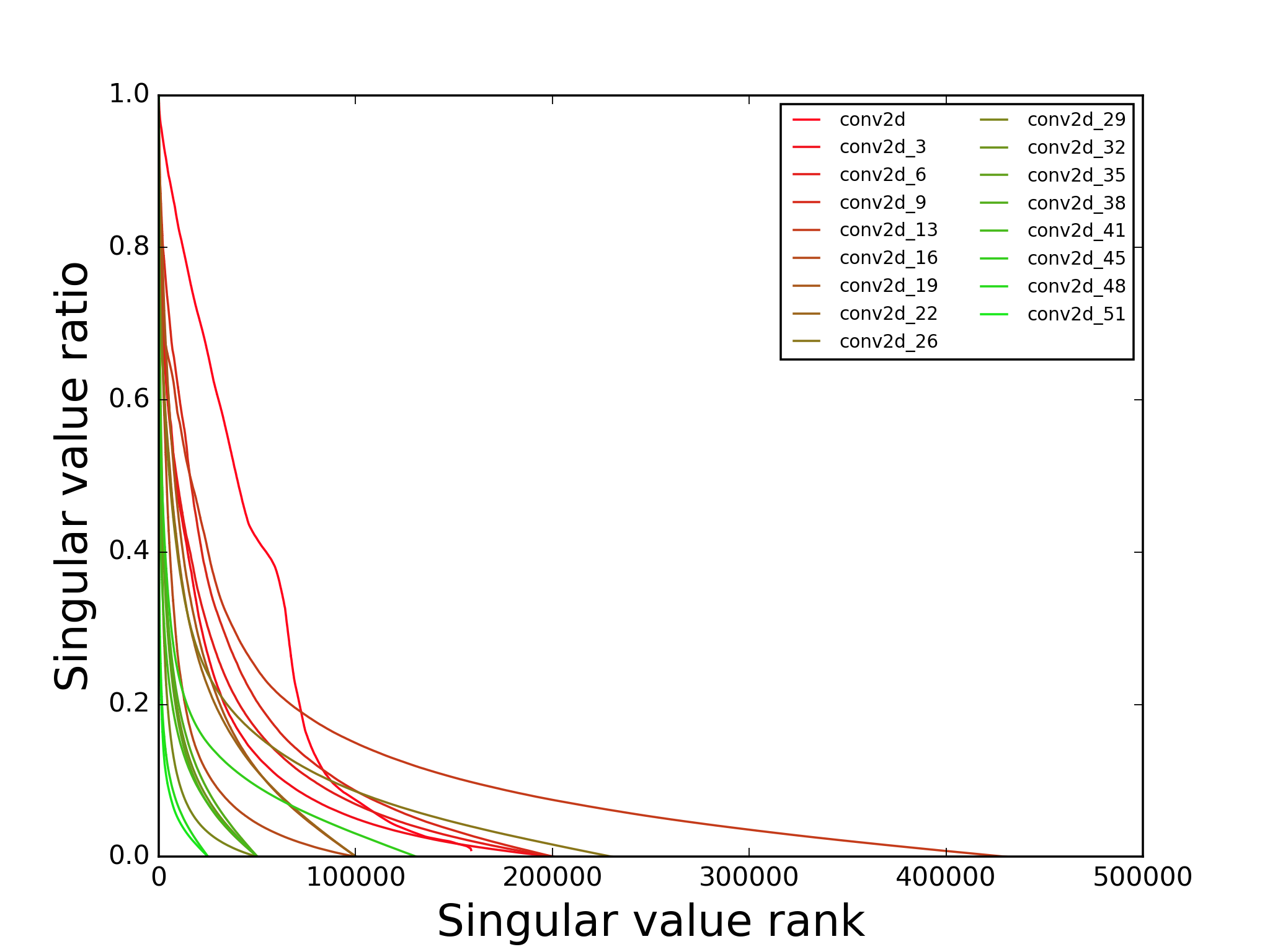}
\caption{Plot of the ratio of singular values to maximum singular value of the linear operators associated with
the convolutional layers of the pretrained "ResNet V2" from the TensorFlow website.  }
\label{fig:ratio}
\end{figure}

Figure~\ref{fig:ratio} shows that different convolutional layers have significantly different numbers of non-negligible singular values.  A question that may arise is to what extent this was due to the fact that different layers simply are of different sizes, so that the total number of their singular values, tiny or not, was different.  To look into this, instead of plotting the singular value ratios as a function of the rank of the singular values, as in the Figure~\ref{fig:ratio}, we normalized the values on the horizontal axis by dividing by the total number of singular values.  The result is shown in Figure~\ref{fig:sizenormalized}.

\begin{figure}
\centering
\includegraphics[width=3in]{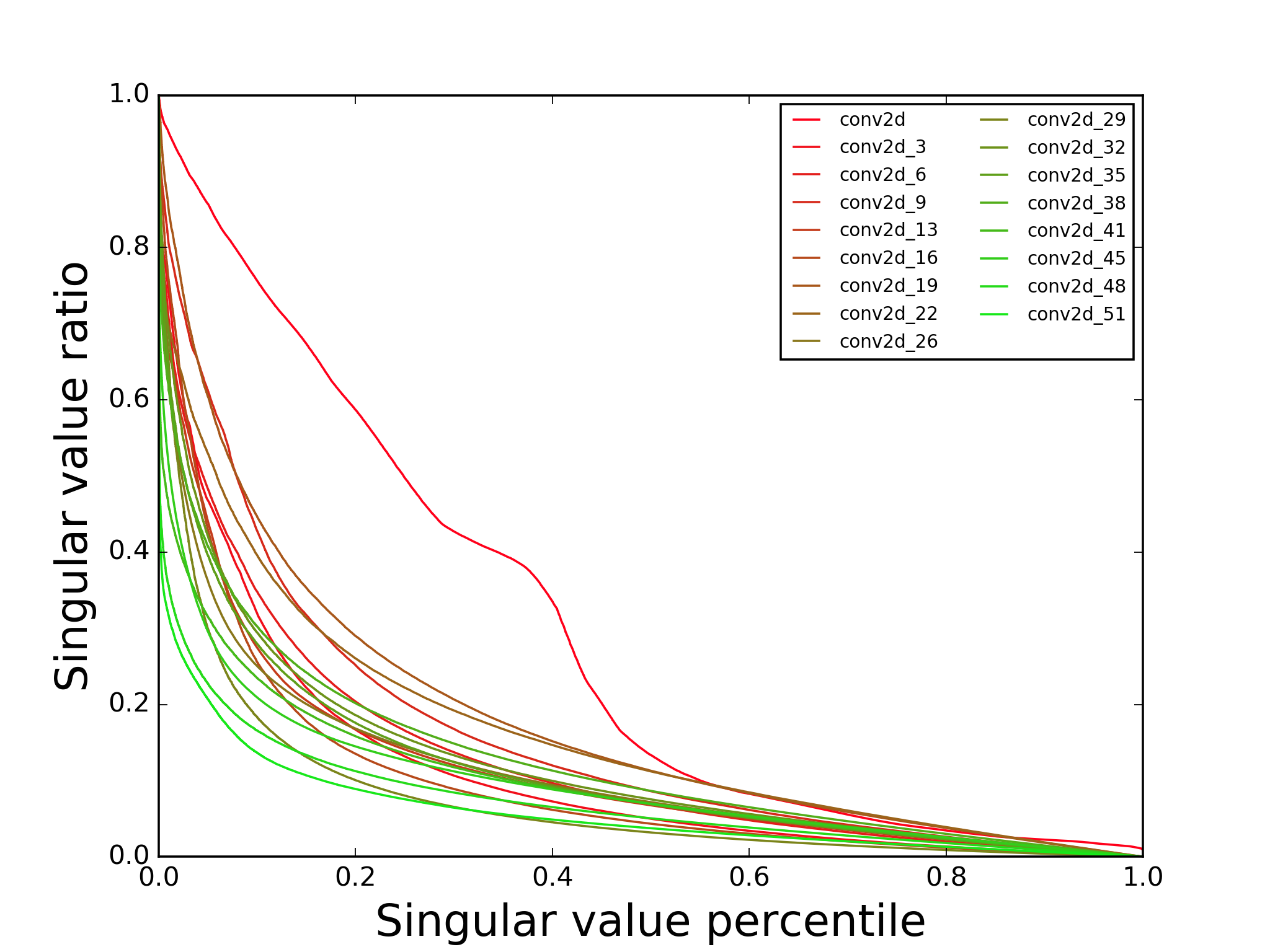}
\caption{Plot of the ratio of singular values to maximum singular value of the linear operators associated with
the convolutional layers of the pretrained "ResNet V2" normalized by size of the convolution.  }
\label{fig:sizenormalized} 
\end{figure}

  \end{document}